\newtheorem{theorem}{Theorem}[section]
\newtheorem{proposition}[theorem]{Proposition}
\title{Robust Reinforcement Learning in Continuous Control Tasks with Uncertainty Set Regularization}
\author{
  Yuan Zhang \\
  Neurorobotics Lab \\
  University of Freiburg \\
  \texttt{yzhang@cs.uni-freiburg.de} \\
  \And
  Jianhong Wang\thanks{Corresponding author.} \\
  Center for AI Fundamentals \\
  University of Manchester \\
  \texttt{jianhong.wang@manchester.ac.uk} \\
  \And
  Joschka Boedecker \\
  Neurorobotics Lab \\
  University of Freiburg \\
  \texttt{jboedeck@cs.uni-freiburg.de} \\
}
\begin{document}
\maketitle


\begin{abstract}
    Reinforcement learning (RL) is recognized as lacking generalization and robustness under environmental perturbations, which excessively restricts its application for real-world robotics. Prior work claimed that adding regularization to the value function is equivalent to learning a robust policy under uncertain transitions. Although the regularization-robustness transformation is appealing for its simplicity and efficiency, it is still lacking in continuous control tasks. In this paper, we propose a new regularizer named $\textbf{U}$ncertainty $\textbf{S}$et $\textbf{R}$egularizer (USR), to formulate the uncertainty set on the parametric space of a transition function. To deal with unknown uncertainty sets, we further propose a novel adversarial approach to generate them based on the value function. We evaluate USR on the Real-world Reinforcement Learning (RWRL) benchmark and the Unitree A1 Robot, demonstrating improvements in the robust performance of perturbed testing environments and sim-to-real scenarios. 
\end{abstract}

\keywords{Reinforcement Learning, Robustness, Continuous Control, Robotics} 

\section{Introduction}
\label{sec:introduction}
    Reinforcement Learning (RL) is a powerful algorithmic paradigm used to solve sequential decision-making problems and has resulted in great success in various types of environments, e.g., mastering the game of Go~\citep{silver2017mastering}, playing computer games~\citep{mnih2013playinga} and operating smart grids \cite{wang2021multiagent}. The majority of these successes rely on an implicit assumption that \textit{the testing environment is identical to the training environment}. However, this assumption is too strong for most realistic problems, such as controlling a robot. In more detail, there are several situations where mismatches might appear between training and testing environments in robotics: (1) \textit{Parameter Perturbations} indicates that a large number of environmental parameters, e.g. temperature, friction factor could fluctuate after deployment and thus deviate from the training environment~\citep{kober2013reinforcement}; (2) \textit{System Identification} estimates a transition function from limited experience. This estimation is biased compared with the real-world model~\citep{schaal1996learning}; (3) \textit{Sim-to-Real} learns a policy in a simulated environment and performs on real robots for reasons of safety and efficiency~\citep{openai2018learning}. The difference between simulated and realistic environments renders sim-to-real a challenging task.
    
  
    In this paper, we aim to model the mismatch between training environments and testing environments as a robust RL problem, which regards training environments and testing environments as candidates in an uncertainty set including all possible environments. Robust Markov Decision Processes (Robust MDPs)~\citep{iyengar2005robust, wiesemann2013robust} is a common theoretical framework to analyze the robustness of RL algorithms. In contrast to the vanilla MDPs with a single transition model $P(s'|s,a)$, Robust MDPs consider an uncertainty set of transition models $\mathbb{P}=\{P\}$ to describe the uncertain environments. This formulation is general enough to cover the various scenarios for the robot learning problems aforementioned.
    
    The aim of Robust RL is to learn a policy under the worst-case scenarios among all transition models $P\in\mathbb{P}$, named \textit{robust policy}. If a transition model $P$ is viewed as an adversarial agent with the uncertainty set $\mathbb{P}$ as its action space, one can reformulate Robust RL as a zero-sum game~\citep{ho2018fast}. In general, solving such a problem is an NP-hard problem~\citep{wiesemann2013robust, derman2021twice}, however, the employment of Legendre-Fenchel transform can avoid excessive mini-max computations through converting minimization over the transition model to regularization on the value function. Furthermore, it enables more feasibility and tractability to design novel regularizers to cover different types of transition uncertainties. The complexity of these value-function-based regularizers increases with the size of the state space, which leads to a nontrivial extension to continuous control tasks with infinitely large state space. This directly motivates the work of this paper. Due to the page limit, we conclude other related work in Appendix~\ref{sec:related_work}. 
    
    We now summarize the contributions of this paper: (1) the robustness-regularization duality method is extended to continuous control tasks in parametric space; (2) the \textbf{U}ncertainty \textbf{S}et \textbf{R}egularizer (USR) on existing RL frameworks is proposed for learning robust policies; (3) the value function is learnt through an adversarial uncertainty set when the actual uncertainty set is unknown in some scenarios; (4) the USR is evaluated on the Real-world Reinforcement Learning (RWRL) benchmark, showing improvements for robust performances in perturbed testing environments with unknown uncertainty sets; (5) the sim-to-real performance of USR is verified through realistic experiments on the Unitree A1 robot. 

\section{Preliminaries}
\label{sec:preliminaries}

    \textbf{Robust MDPs.}
    \label{sec:robust_mdp}
    The mathematical framework of Robust MDPs~\citep{iyengar2005robust, wiesemann2013robust} extends regular MDPs in order to deal with uncertainty about the transition function. 
    A Robust MDP can be formulated as a 6-tuple $\langle \mathcal{S},\mathcal{A}, \mathbb{P}, r, \mu_0,  \gamma \rangle$, where $\mathcal{S}, \mathcal{A}$ stand for the state and action space respectively, and $r(s,a): \mathcal{S} \times \mathcal{A} \to \mathbb{R}$\ stands for the reward function. Let $\Delta_{\mathcal{S}}$ and $\Delta_{\mathcal{A}}$ be the probability measure on $S$ and $A$ respectively. The initial state is sampled from an initial distribution $\mu_0 \in \Delta_{\mathcal{S}}$, and the future rewards are discounted by the discount factor $\gamma \in [0,1] $. 
    The most important concept in robust MDPs is the uncertainty set $\mathbb{P}=\left\{ P(s'|s,a) \right\}$ that controls the variation of transition function $P:\mathcal{S} \times \mathcal{A} \to \Delta_{\mathcal{S}}$, compared with the stationary transition function $P$ in regular MDPs. Let $\Pi=\left\{ \pi(a|s): \mathcal{S} \to \Delta_{\mathcal{A}} \right\}$ be the policy space; the objective of Robust RL can then be formulated as a minimax problem such that
    \begin{equation}
    \label{eq:robust_objective}
        J^* = \max_{\pi \in \Pi} \min_{P \in \mathbb{P}} \mathbb{E}_{ \pi, P } \left[ \sum_{t=0}^{+\infty} \gamma^t r(s_t, a_t) \right].
    \end{equation}
    
    \textbf{Robust Bellman Equation.}
    \label{sec:robust_bellman}
    While \citet{wiesemann2013robust} has proved NP-hardness of this mini-max problem with an arbitrary uncertainty set, most recent studies~\citep{iyengar2005robust, ho2018fast, derman2021twice, nilim2004robustness, roy2017reinforcement, mankowitz2019robust, wang2021online, grand-clement2021scalablea, derman2018softrobusta} approximate it by assuming a rectangular structure on the uncertainty set, i.e., $\mathbb{P} = \mathop{\times}_{ (s,a) \in \mathcal{S} \times \mathcal{A} }^{} \mathbb{P}_{sa}, {\rm where }\ \mathbb{P}_{sa}=\{ P(s'|s,a) \quad  \text{$P_{sa}(s')$ in short} \}$ denotes the local uncertainty of the transition at $(s, a)$. In other words, the variation of transition is independent at every $(s, a)$ pair. Under the assumption of a rectangular uncertainty set, the robust action value $Q^{\pi}(s, a)$ under policy $\pi$ must satisfy the following robust version of the Bellman equation~\citep{bellman1966dynamic} such that
    \begin{equation}
    \label{eq:bellman_equation}
    \begin{split}
          Q^{\pi}(s, a) = r(s, a) +  \min_{P_{sa} \in \mathbb{P}_{sa} }\gamma \sum_{s'}P_{sa}(s') V^{\pi}(s'),
    \end{split}
    \end{equation}
   where $V^{\pi}(s') = \sum_{a'} \pi(a'|s') Q^{\pi}(s', a')$. \citet{nilim2004robustness} have shown that a robust Bellman operator admits a unique fixed point of Equation \ref{eq:bellman_equation}, the robust action value $Q^{\pi}(s, a)$. 

    
    \textbf{Robustness-Regularization Duality.}
    \label{sec:duality}
    \sloppy
    Solving the minimization problem in the RHS of Equation \ref{eq:bellman_equation} can be further simplified by the Legendre-Fenchel transform~\citep{rockafellar1970convexa}. For an arbitrary function $f: X \to \mathbb{R}$, its convex conjugate is $f^{*}\left(x^{*}\right):=\sup \left\{\left\langle x^{*},x\right\rangle -f(x)~\colon ~x\in X\right\}$. Define $\delta_{\mathbb{P}_{sa}}(P_{sa}) =0$ if $P_{sa} \in {\mathbb{P}_{sa}} $ and $+\infty$ otherwise, Equation \ref{eq:bellman_equation} can be transformed to its convex conjugate (refer to \citet{derman2021twice} for detailed derivation) such that
    \begin{equation}
    \label{eq:bellman_equation_dual}
    \begin{split}
         Q^{\pi}(s, a) &=  r(s, a) +  \min_{P_{sa} }\gamma \sum_{s'}P_{sa}(s') V^{\pi}(s') + \delta_{\mathbb{P}_{sa}}(P_{sa}) =  r(s, a) - \delta^*_{\mathbb{P}_{sa}}(-V^{\pi}(\cdot)).
    \end{split}    
    \end{equation}
    The transformation implies that the robustness condition on transitions can be equivalently expressed as a regularization term on the value function, which is referred to as the robustness-regularization duality. The duality can extensively reduce the cost of solving the minimization problem over infinite transition choices and thus is widely studied in the robust reinforcement learning research community~\citep{husain2021regularized, eysenbach2021maximum, brekelmans2022youra}.
    
    As a special case, \citet{derman2021twice} considered a $L_2$ norm uncertainty set on transitions, i.e., $\mathbb{P}_{sa} = \{  \bar{P}_{sa} + \alpha \tilde{P}_{sa} : \| \tilde{P}_{sa} \|_2 \le 1 \}$, where $\bar{P}_{sa}$ is usually called the nominal transition model. It can represent the prior knowledge of a transition model or a numerical value of the training environment. The uncertainty set implies that the transition model is allowed to fluctuate around the nominal model with some degree $\alpha$. Therefore, the corresponding Bellman equation in Equation \ref{eq:bellman_equation_dual} becomes $Q^{\pi}(s, a) =  r(s, a) + \gamma   \sum_{s'}\bar{P}_{sa}(s') V^{\pi}(s') - \alpha\|V^{\pi}(\cdot)\|_2$. Similarly, the $L_1$ norm has also been used as uncertainty set on transitions~\citep{wang2021online}, i.e., $\mathbb{P}_{sa} = \{  \bar{P}_{sa} + \alpha \tilde{P}_{sa} : \| \tilde{P}_{sa} \|_1 \le 1 \}$, and the Bellman equation becomes the form such that $Q^{\pi}(s, a) =  r(s, a) + \gamma  \sum_{s'}\bar{P}_{sa}(s') V^{\pi}(s') - \alpha \max_{s'}|V^{\pi}(s')|$. This robustness-regularization duality works well in the finite state space but the extension to the infinite state space is still a question. We claim that such an extension is non-trivial, since both regularizers $\|V^{\pi}(\cdot)\|_2$ and $\max_{s'}|V^{\pi}(s')|$ need to be calculated on the infinite-dimensional vector $V^{\pi}(\cdot)$. 
    In this work, we extend this concept to the continuous state space which is a critical characteristic in robotics.

\section{Uncertainty Set Regularized Robust Reinforcement Learning}
\label{sec:method}
    Having introduced the robustness-regularization duality and the difficulties regarding its extension to the continuous state space in Section \ref{sec:duality}, here, we will first present a novel extension to the continuous state space with the uncertainty set defined on the parametric space of a transition function. We will then utilize this extension to derive a robust policy evaluation method that can be directly plugged into existing RL algorithms. Furthermore, to deal with the unknown uncertainty set, we propose the adversarial uncertainty set and visualize it in a simple \textit{moving-to-target} task.

\subsection{Uncertainty Set Regularized Robust Bellman Equation (USR-RBE) }
\label{sec:usr_rbe}
   For environments with continuous state space, the transition model $P(s'|s,a)$ is usually represented as a parametric function $P(s'|s,a;w)$, where $w$ denotes the parameters of the transition function. Instead of defining the uncertainty set on the distribution space, we directly impose a perturbation on $w$ within a set $\Omega_w$. Consequently, the robust objective function (Equation \ref{eq:robust_objective}) becomes $J^* = \max_{\pi \in \Pi} \min_{w \in \Omega_w} \mathbb{E}_{ \pi, P(s'|s,a;w) } \left[ \sum_{t=0}^{+\infty} \gamma^t r(s_t, a_t) \right]$. We further assume that the parameter $w$ fluctuates around a nominal parameter $\bar{w}$, such that $w=\bar{w} + \tilde{w}$, with $\bar{w}$ being a fixed parameter and $\tilde{w} \in \Omega_{\tilde{w}} =\{ w-\bar{w} | w \in \Omega_w\}$ being the perturbation part. Inspired by Equation \ref{eq:bellman_equation_dual}, we can derive a robust Bellman equation on the parametric space for continuous control problems as shown in Proposition~\ref{prop:robust_bellman_equation}.
   \begin{proposition}[Uncertainty Set Regularized Robust Bellman Equation]
        Suppose the uncertainty set of $w$ is $\Omega_w$ (i.e., the uncertainty set of $\tilde{w}=w-\bar{w}$ is $\Omega_{\tilde{w}}$), 
        the robust Bellman equation on the parametric space can be represented as follows: 
        \begin{equation}
        \label{eq:robust_bellman_equation}
        \begin{split}
             Q^{\pi}(s, a) &=  r(s, a) + \gamma \int_{s'} P(s'|s,a; \bar{w}) V^{\pi}(s') ds' - \gamma \int_{s'} \delta^*_{\Omega_{\tilde{w}}} \left[-\nabla_{w}P(s'|s,a; \bar{w}) V^{\pi}(s') \right] ds',
        \end{split}
        \end{equation}
    where $\delta_{\Omega_w}(w)$ is the indicator function that equals $0$ if $w \in \Omega_w$ and $+\infty$ otherwise,  and $\delta^*_{\Omega_w}(w')$ is the convex dual function of $\delta_{\Omega_w}(w)$. 
    \label{prop:robust_bellman_equation}
    \end{proposition}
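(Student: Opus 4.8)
The plan is to write the robust Bellman equation directly on the parametric space and then replay the Legendre--Fenchel argument that takes Equation~\ref{eq:bellman_equation} to Equation~\ref{eq:bellman_equation_dual}, except that the inner minimization is now over the parameter perturbation $\tilde w$ instead of over a transition distribution. By definition of the robust action value as a worst case over the uncertainty set, and with the continuous-state integral replacing the finite sum, the parametric robust Bellman equation is $Q^\pi(s,a) = r(s,a) + \min_{w\in\Omega_w}\gamma\int_{s'}P(s'|s,a;w)V^\pi(s')\,ds'$. Substituting $w=\bar w+\tilde w$ turns the constraint into $\tilde w\in\Omega_{\tilde w}$, which is the form I want to dualize.

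First I would linearize the transition around the nominal parameter $\bar w$. A first-order Taylor expansion gives $P(s'|s,a;\bar w+\tilde w)\approx P(s'|s,a;\bar w)+\tilde w^\top\nabla_w P(s'|s,a;\bar w)$, so the integrand separates into a nominal part that is independent of $\tilde w$ and a part linear in $\tilde w$. The nominal part factors out of the minimization and yields $\gamma\int_{s'}P(s'|s,a;\bar w)V^\pi(s')\,ds'$, reproducing the second term of the claimed identity, so only the linear residual needs to be dualized.

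To dualize the residual I would add the indicator $\delta_{\Omega_{\tilde w}}(\tilde w)$ to make the constraint implicit and use the elementary identity $\min_{\tilde w}\langle g,\tilde w\rangle+\delta_{\Omega_{\tilde w}}(\tilde w)=-\sup_{\tilde w}\{\langle -g,\tilde w\rangle-\delta_{\Omega_{\tilde w}}(\tilde w)\}=-\delta^*_{\Omega_{\tilde w}}(-g)$, together with the observation that multiplying an indicator by the positive constant $\gamma$ leaves it unchanged. Applying this pointwise in $s'$ to $g(s')=\nabla_w P(s'|s,a;\bar w)V^\pi(s')$ produces exactly the regularizer $-\gamma\int_{s'}\delta^*_{\Omega_{\tilde w}}[-\nabla_w P(s'|s,a;\bar w)V^\pi(s')]\,ds'$.

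The step I expect to be the main obstacle is the interchange of the minimization with the integral over $s'$: a single shared perturbation $\tilde w$ would only let me pull the conjugate of the integrated quantity out of the integral, giving the weaker bound $-\gamma\,\delta^*_{\Omega_{\tilde w}}(-\int_{s'}\nabla_w P\,V^\pi\,ds')$. Recovering the stated equality, with the conjugate inside the integral, requires an $s'$-rectangular reading of the parametric uncertainty, i.e. allowing the perturbation to be selected independently for each next state so that the worst case decomposes across $s'$. I would therefore make this rectangular structure explicit as part of the setup, and also note that the Taylor step is exact when $P$ is affine in $w$ and is otherwise the intended local first-order model; both points are where the argument must be pinned down carefully rather than in any heavy computation.
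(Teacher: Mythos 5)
Your proposal follows essentially the same route as the paper's own proof: write the robust Bellman equation as a minimization over $w\in\Omega_w$, Taylor-expand $P(s'|s,a;w)$ to first order at $\bar w$, convert the minimization over $\tilde w$ into a maximization penalized by the indicator $\delta_{\Omega_{\tilde w}}$, and invoke the definition of the convex conjugate. Where you diverge is that you are more careful than the paper. The obstacle you flag --- that a single shared $\tilde w$ only yields $-\gamma\,\delta^*_{\Omega_{\tilde w}}\bigl(-\int_{s'}\nabla_w P(s'|s,a;\bar w)\,V^\pi(s')\,ds'\bigr)$, i.e.\ the conjugate of the integrated quantity rather than the conjugate inside the integral --- is exactly the step the paper's derivation glosses over: its penultimate line equals, by definition, the conjugate evaluated at the integral, and the final line silently moves the conjugate inside the integral with the comment that it ``directly follows the definition.'' Since $\delta^*_{\Omega_{\tilde w}}$ is a support function, hence positively homogeneous and subadditive, one only has $\delta^*_{\Omega_{\tilde w}}\bigl(\int g\bigr)\le\int\delta^*_{\Omega_{\tilde w}}(g)$ in general, so the stated regularizer upper-bounds the true worst-case correction (a conservative, i.e.\ lower, estimate of $Q^\pi$); equality holds precisely under the $s'$-rectangular reading you propose, in which the perturbation may be chosen independently for each next state. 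Making that rectangularity explicit, together with your remark that the Taylor step is exact only when $P$ is affine in $w$ (the paper likewise treats the expansion as exact), pins down the two loose points in the published argument rather than introducing any gap of your own.
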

    
    The proof is presented in Appendix~\ref{sec:proof_of_usr-rbe}. 
    Intuitively, Proposition \ref{prop:robust_bellman_equation} shows that ensuring robustness on parameter $w$ can be transformed into regularization on action value $Q^{\pi}(s,a)$ that relates to the product of the state value function $V^{\pi}(s')$ and the derivative of the transition model $\nabla_{w}P(s'|s,a;\bar{w})$. 
    Taking the $L_2$ uncertainty set (also used in \citet{derman2021twice}) as a special case, i.e., $\Omega_w = \{\bar{w} + \alpha \tilde{w} : \| \tilde{w} \|_2 \le 1 \}$, where $\bar{w}$ stands for the parameter of the nominal transition model $P(s'|s,a';\bar{w})$, the robust Bellman equation in Proposition \ref{prop:robust_bellman_equation} becomes 
    \begin{equation}
    \label{eq:l2_usr}
        \begin{split}
            Q^{\pi}(s, a) &=  r(s, a) + \gamma   \int_{s'}P(s'|s,a;\bar{w}) V^{\pi}(s')ds' - \alpha \int_{s'} \|\nabla_wP(s'|s,a;\bar{w})V^{\pi}(s')\|_2 ds'.
        \end{split}
    \end{equation}

\subsection{Uncertainty Set Regularized Robust Reinforcement Learning (USR-RRL)}
\label{sec:usr_rrl}
    To derive a practical robust RL algorithm using the proposed USR-RBE, we follow the policy iteration framework~\cite{sutton1998reinforcement} commonly used in RL research. Regarding the policy evaluation procedure, Theorem \ref{theorem:robust_policy_evaluation} proposes an operator on action value $Q^{\pi}$ and ensures its convergence to a unique fixed point by recursively running this operator. The proof can be found in Appendix~\ref{sec:convergence_of_robust_policy_evaluation}. Intuitively, this theorem indicates that one can acquire a robust action value given a certain policy and uncertainty set if the discount factor $\gamma$ and the coefficient in the uncertainty set $\alpha$ are properly set, which is satisfied in the practical implementations. As for the policy improvement procedure, all existing techniques (e.g. policy gradient methods) can be adopted to optimize the policy. By iterating the policy evaluation and improvement cycle, the policy will eventually converge to an equilibrium trading off optimality and robustness.
 
    \begin{theorem}[Convergence of Robust Policy Evaluation]
        For any policy $\pi \in \Delta_{\mathcal{A}}$, the following operator $T$ can reach a unique fixed point as the robust action value $Q^{\pi}(s,a)$ if $0 \le \gamma + \alpha \max_{s, a} \bigg| \int_{s'} \|\nabla_wP(s'|s,a;\bar{w})\|_2 ds' \bigg| \le 1 $ . 
        \begin{equation*}
            \begin{split}
                T Q^{\pi}(s, a) &= r(s, a) + \gamma \int_{s'}P(s'|s,a;\bar{w}) V^{\pi}(s') ds' - \alpha \int_{s'} \|\nabla_wP(s'|s,a;\bar{w}) V^{\pi}(s') \|_2 ds', 
            \end{split}
        \end{equation*}
        where $V^{\pi}(s') = \int_{s'} \pi(a'|s') Q(s', a')da'$. 
    \label{theorem:robust_policy_evaluation}
    \end{theorem}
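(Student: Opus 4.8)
The plan is to show that $T$ is a contraction (more precisely, nonexpansive, and strictly contractive when the hypothesis holds with strict inequality) on the space of bounded functions $Q\colon \mathcal{S}\times\mathcal{A}\to\mathbb{R}$ equipped with the supremum norm $\|Q\|_\infty = \sup_{s,a}|Q(s,a)|$, and then to invoke the Banach fixed-point theorem for existence and uniqueness of the fixed point. Concretely, I would fix two action-value functions $Q_1,Q_2$, set $V_i(s') = \int_{a'}\pi(a'|s')Q_i(s',a')\,da'$ for $i=1,2$, and estimate $|TQ_1(s,a)-TQ_2(s,a)|$ pointwise. The reward term cancels, so the difference splits into a nominal-transition term and a regularizer term, which I would bound separately.

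For the nominal term, $\gamma\int_{s'}P(s'|s,a;\bar{w})[V_1(s')-V_2(s')]\,ds'$ is controlled by $\gamma\|V_1-V_2\|_\infty$ via the triangle inequality and the fact that $P(\cdot|s,a;\bar{w})$ integrates to one. The regularizer term is the crux. Here I would use two facts: first, since $V(s')$ is a scalar, the vector $\nabla_w P(s'|s,a;\bar{w})\,V(s')$ factors as $\|\nabla_w P(s'|s,a;\bar{w})V(s')\|_2 = |V(s')|\,\|\nabla_w P(s'|s,a;\bar{w})\|_2$; second, the reverse triangle inequality for the $\ell_2$ norm gives $\big|\|x\|_2-\|y\|_2\big|\le\|x-y\|_2$. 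Combining these yields $\big|\|\nabla_wP\,V_1\|_2-\|\nabla_wP\,V_2\|_2\big|\le\|\nabla_wP\|_2\,|V_1(s')-V_2(s')|$, so the regularizer term is at most $\alpha\|V_1-V_2\|_\infty\int_{s'}\|\nabla_w P(s'|s,a;\bar{w})\|_2\,ds'$.

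Adding the two estimates and taking the supremum over $(s,a)$ gives $\|TQ_1-TQ_2\|_\infty \le \big(\gamma + \alpha\max_{s,a}\int_{s'}\|\nabla_wP(s'|s,a;\bar{w})\|_2\,ds'\big)\|V_1-V_2\|_\infty$. The last step descends from $V$ back to $Q$: since $\pi(\cdot|s')$ integrates to one, the triangle inequality gives $\|V_1-V_2\|_\infty\le\|Q_1-Q_2\|_\infty$. This produces a Lipschitz constant exactly equal to the quantity in the hypothesis (note the integrand $\|\nabla_wP\|_2\ge 0$, so the absolute value there is redundant), and the assumption $\gamma + \alpha\max_{s,a}|\int_{s'}\|\nabla_wP\|_2\,ds'|\le 1$ makes $T$ nonexpansive, with strict contraction under strict inequality, yielding the unique fixed point.

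I expect the main obstacle to be the regularizer term: unlike the classical robust Bellman operator, the perturbation enters through $\nabla_w P$ rather than through a transition kernel, so I must verify that the map $V\mapsto\int_{s'}\|\nabla_wP(s'|s,a;\bar{w})V(s')\|_2\,ds'$ is Lipschitz in $V$ and identify its Lipschitz constant with the integral $\int_{s'}\|\nabla_wP\|_2\,ds'$ appearing in the hypothesis. The scalar-factoring step together with the reverse triangle inequality are what make this term behave linearly enough to match the clean coefficient; some care is also needed to ensure this integral is finite so that the contraction constant is well defined.
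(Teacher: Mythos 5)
Your proposal is correct and follows essentially the same route as the paper's proof: a sup-norm contraction argument splitting $\|TQ_1-TQ_2\|_\infty$ into the nominal $\gamma$-term and the regularizer term, handling the latter via the reverse triangle inequality and the scalar factoring $\|\nabla_w P\, V\|_2 = |V|\,\|\nabla_w P\|_2$, then descending from $V$ to $Q$ since $\pi$ integrates to one. Your explicit remark that the hypothesis with equality only yields nonexpansiveness (strict contraction requiring strict inequality for Banach's theorem) is in fact slightly more careful than the paper's own treatment.
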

    

    
    
    \sloppy 
    A practical concern on this algorithm is that calculating USR-RBE requires knowledge of the transition model $P(s'|s, a; \bar{w})$. This naturally applies to model-based RL, as model-based RL learns a point estimate of the transition model $P(s'|s, a; \bar{w})$ by maximum likelihood approaches~\citep{kegl2021modelbaseda}. For model-free RL, we choose to construct a local Gaussian model with mean as parameters inspired by previous work~\citep{nguyen-tuong2009model}.
    Specifically, suppose that one can access a triple (state $s$, action $a$ and next state $x$) from the experience replay buffer, then a local transition model $P(s'|s,a;\bar{w})$ can be modelled as a Gaussian distribution with mean $x$ and covariance $\Sigma$ (with $\Sigma$ being a hyperparameter), i.e., the nominal parameter $\bar{w}$ consists of $(x, \Sigma)$. With this local transition model, we now have the full knowledge of $P(s'|s,a;\bar{w})$ and $\nabla_wP(s'|s,a;\bar{w})$, which allows us to calculate the RHS of Equation \ref{eq:l2_usr}. To further approximate the integral calculation in Equation \ref{eq:l2_usr}, we sample $M$ points $\{s'_1, s'_2, ..., s'_M\}$ from the local transition model $P(s'|s,a;\bar{w})$ and use them to approximate the target action value by $Q^{\pi}(s, a) \approx r(s, a) + \gamma \sum_{i=1}^M  \Big[   V^{\pi}(s'_i) -  \alpha \| \nabla_w P(s'_i|s,a;\bar{w})V^{\pi}(s_i)\|_2 / P(s'_i|s,a;\bar{w}) \Big]$, where $\bar{w}=(x, \Sigma)$. With this approximation, the Bellman operator is guaranteed to converge to the robust value, and policy improvement is applied to robustly optimize the policy. We explain how to incorporate USR-RBE into a classic RL algorithm called Soft Actor Critic (SAC)~\citep{haarnoja2018softa} in Appendix~\ref{sec:algorithm_on_incorporating_usr_into_sac}.

    
\subsection{Adversarial Uncertainty Set}
\label{sec:adv}
    The proposed method in Section \ref{sec:usr_rrl} relies on the prior knowledge of the uncertainty set of the parametric space. The $L_p$ norm uncertainty set is most widely used in the Robust RL and robust optimal control literature. However, such a fixed uncertainty set may not sufficiently adapt to various perturbation types. The $L_p$ norm uncertainty set with its larger region can result in an over-conservative policy, while the one with a smaller region may lead to a risky policy. In this section, we learn an adversarial uncertainty set through the agent's policy and value function to avoid the above issues. 
    \begin{wrapfigure}{r}{0.62\textwidth}
      \begin{center}
        \includegraphics[width=0.62\textwidth]{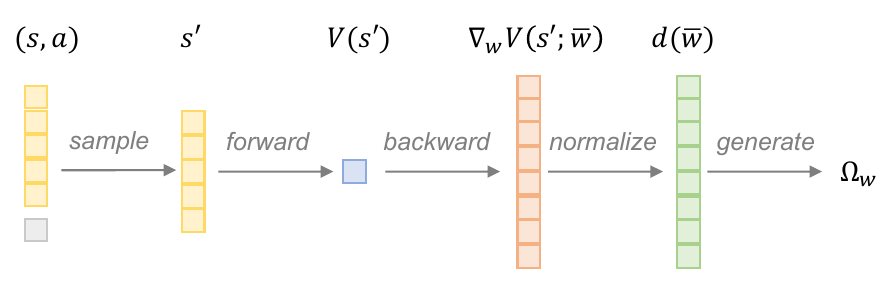}
      \end{center}
      \caption{Illustration of the procedure for generating an adversarial uncertainty set.}
    \label{fig:adv_uc}
    \end{wrapfigure}
 
    \textbf{Generating the Adversarial Uncertainty Set.} The basic idea of the adversarial uncertainty set is to provide an appropriate uncertainty range to parameters that are more sensitive to the value function, which is naturally measured by the derivative. The agent learning based on such an adversarial uncertainty set is easier to adapt to the various perturbation types of parameters. We generate the adversarial uncertainty set in a 5-step procedure as illustrated in Figure \ref{fig:adv_uc}, (1) \textit{sample} next state $s'$ according to the distribution $P(\cdot|s,a;\bar{w})$, given the current state $s$ and action $a$; (2) \textit{forward} pass by calculating the state value $V(s')$ at next state $s'$; (3) \textit{backward} pass by using the reparameterization trick~\citep{kingma2014autoencodinga} to compute the derivative $g(\bar{w}) = \nabla_{w}V(s';\bar{w})$; (4) \textit{normalize} the derivative by $d(\bar{w})= g(\bar{w})/[\sum_i^W g(\bar{w})_i^2]^{0.5}$; (5) \textit{generate} the adversarial uncertainty set $\Omega_w = \{  \bar{w} + \alpha \tilde{w} : \| \tilde{w} / d(\bar{w})\|_2 \le 1 \}$. The pseudo-code to generate the adversarial uncertainty set is explained in Appendix~\ref{sec:algorithm_on_generation_of_adversarial_uncertainty_set} Algorithm~\ref{alg:usr}. 

    \begin{figure*}[ht!]
    \centering
       \includegraphics[width=\textwidth]{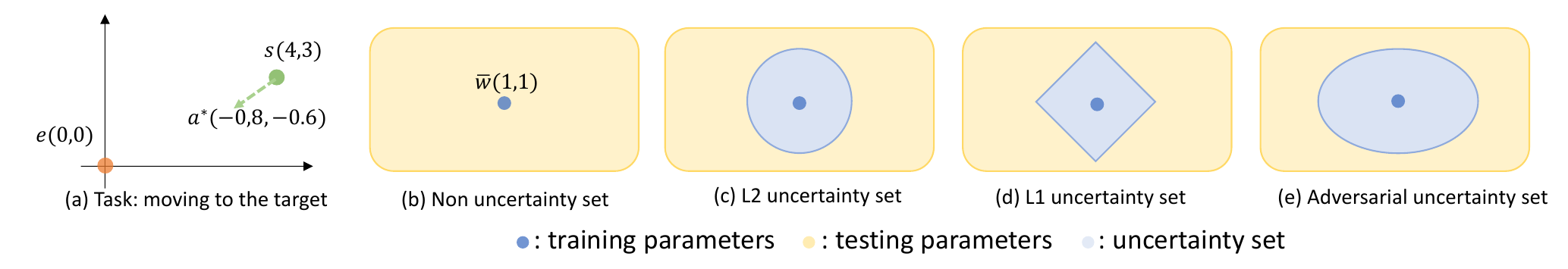}
    \caption{Illustration of different types of uncertainty sets to investigate their characteristics.}
    \label{fig:uc_example}
    \end{figure*} 
    
    \textbf{Characteristics of the Adversarial Uncertainty Set.} To further investigate the characteristics of the adversarial uncertainty set, we visualize it in a simple \textit{moving-to-target} task: controlling a particle to move towards a target point $e$ (Figure \ref{fig:uc_example}.a). The two-dimensional state $s$ informs the position of the agent, and the two-dimensional action $a=(a_1, a_2)$ ($\|a\|_2=1$ is normalized) controls the force in two directions. The environmental parameter $w=(w_1, w_2)$ represents the contact friction in two directions respectively. The transition function is expressed as $s' \sim \mathcal{N}(s+(a_1w_1, a_2w_2), \Sigma)$, and the reward is defined by the difference of the distances to the target point at two successive steps minus a stage cost: $r(s,a,s')=d(s,e)-d(s',e)-2$. The nominal value $\bar{w}=(1,1)$ (Figure \ref{fig:uc_example}.b) indicates the equal friction factor in two directions for the training environment. It is easy to conjecture that the optimal action is pointing towards the target point, and the optimal value function is $V^*(s)=-d(s,e)$. We visualize $L_2$, $L_1$ and adversarial uncertainty set of the contact friction $w$ in Figure \ref{fig:uc_example}.(c,d,e) respectively, at a specific state $s=(4,3)$ and the optimal action $a^*=(-0.8, -0.6)$. The uncertainty sets $L_2$ and $L_1$ satisfy $ (w_1^2+w_2^2)^{0.5} \le 1$ and $|w_1|+|w_2| \le 1$ respectively. Adversarial uncertainty set is calculated by following the generation procedure described in Section~\ref{sec:adv}, where the normalized derivative $d(\bar{w})$ is $[0.8, 0.6]^T$ and adversarial uncertainty set is $(w_1^2/0.64+w_2^2/0.36)^{0.5} \le 1$, as an ellipse in Figure~\ref{fig:uc_example}.e. Compared with the $L_2$ uncertainty set, the adversarial uncertainty set extends the perturbation range of the horizontal dimensional parameter since it could be more sensitive to the final return. As a result, the agent learning to resist such an uncertainty set is expected to perform well on unseen perturbation types, which will be verified in more realistic experiments in the next section.
    

\section{Experiments}
\label{sec:experiments}
    In this section, we provide experimental results on the Real-world Reinforcement Learning (RWRL) benchmark~\citep{dulac-arnold2019challengesa}, to validate the effectiveness of the proposed regularizing USR for resisting perturbations in the environment. Besides, we apply USR on a sim-to-real task to show its potential on real-world robots.  

\subsection{Experimental Setups} 
\label{sec:exp_setup}
    \textbf{Task Description.} 
    RWRL, whose back-end is the Mujoco environment~\citep{todorov2012mujoco}, is a continuous control benchmark consisting of real-world challenges for RL algorithms. 
    Using this benchmark, we will evaluate the proposed algorithm regarding the robustness of the learned policy in physical environments with perturbations of parameters of the state equations (dynamics). In more detail, we first train a policy through interaction with the nominal environments (i.e., the environments without any perturbations), and then test the policy in the environments with perturbations within a range over relevant physical parameters.
    The setup is visualized as a process map in Appendix~\ref{sec:process_map_of_experiments}.
    \sloppy In this paper, we conduct experiments on six tasks: \textit{cartpole\_balance}, \textit{cartpole\_swingup}, \textit{walker\_stand}, \textit{walker\_walk}, \textit{quadruped\_walk}, \textit{quadruped\_run}, with growing complexity in state and action space. More details about the specifications of tasks are shown in Appendix~\ref{sec:rwrl}. The perturbed variables and their value ranges can be found in Table~\ref{tab:rwrl_env}.
    
    \textbf{Evaluation metric.} 
    A severe problem for Robust RL research is the lack of a standard metric to evaluate policy robustness. To resolve this obstacle, we define a new robustness evaluation metric which we call \textit{Robust-AUC} to calculate the area under the curve of the return with respect to the perturbed physical variables, in analogy to the definition of regular AUC~\citep{huang2005using}. More specifically, a trained policy $\pi$ is evaluated in an environment with perturbed variable $P$ whose values $v$ change in the range $[v_{min}, v_{max}]$ and achieves different returns $r$. Then, these two sets of data are employed to draw a parameter-return curve $C(v, r)$ to describe the relationship between returns $r$ and perturbed values $v$. We define the relative area under this curve as \textit{Robust-AUC} such that $\textit{Robust-AUC}=\frac{Area(C(v, r))}{v_{max}-v_{min}}, v \in [v_{min}, v_{max}]$. Compared to the vanilla AUC, \textit{Robust-AUC} describes the correlation between returns and the perturbed physical variables, which can sensitively reflect the response of a learning procedure (to yield a policy) to unseen perturbations, i.e., the robustness. We further explain the practical implementations to calculate $\textit{Robust-AUC}$ of the experiments in Appendix~\ref{sec:evaluation_details}. 
    
    
    \textbf{Baselines and Implementation of Proposed Methods.}
    We first compare USR with a standard version of Soft Actor Critic (SAC)~\citep{haarnoja2018softa}, which stands for the category of algorithms without regularizers (\textit{None-Reg}). Another category of baselines is to directly impose $L_p$ regularization on the parameters of the value function (\textit{L1-Reg}, \textit{L2-Reg})~\citep{liu2021regularizationa}, which is a common way to improve the generalization of function approximation but without consideration of the environmental perturbations; For a fixed uncertainty set as introduced in Section \ref{sec:usr_rrl}, we implement two types of uncertainty sets on transitions, \textit{L2-USR} and \textit{L1-USR}, which can be viewed as an extension of \citet{derman2021twice} and \citet{wang2021online} for continuous control tasks respectively; finally, we also evaluate the adversarial uncertainty set (Section \ref{sec:adv}), denoted as \textit{Adv-USR}. We conclude all model structures and hyperparameters in Appendix~\ref{sec:model_structure}~-~\ref{sec:hyperparameters}. The code of experiments is available on \url{github.com/mikezhang95/rrl_usr}.
    

\subsection{Main Results}
\label{sec:main_results}

    \begin{table*}[ht] 
    \caption{\textit{Robust-AUC} of all algorithms and their uncertainties on RWRL benchmark.}
    \begin{center}
    	 \scalebox{0.6}{
    		\begin{tabular}{llcccccc}
    			\toprule
    			\multirow{2}{*}{\textbf{Task Name}} & 
    			\multirow{2}{*}{\textbf{Variables}} &
    			\multicolumn{6}{c}{\textbf{Algorithms}} \\
    			&& \textit{None-Reg} & \textit{L1-Reg} & \textit{L2-Reg} & \textit{L1-USR} & \textit{L2-USR} & \textit{Adv-USR} \\
		\midrule	\multirow{5}{*}{\textit{cartpole\_swingup}} 
		& pole\_length    & 393.41 (21.95) & 319.73 (23.84) & 368.16 (6.33) & \textbf{444.93 (12.77)} & 424.48 (2.27) & 430.91 (2.35) \\
		& pole\_mass      & 155.25 (4.79) & 96.85 (4.32) & 131.35 (12.42) & 175.28 (3.33 ) & 159.61 (2.41)  & \textbf{193.13 (2.27)}   \\
	    & joint\_damping  & 137.20 (4.55) & 140.16 (0.58) & 165.01 (0.16) & 164.21 (0.48) & 169.88 (2.68) & \textbf{170.39 (0.76)}  \\
	    & slider\_damping & 783.76 (9.14) & 775.73 (16.50) & 797.59 (5.52) & 793.55 (5.02) & 781.02 (5.80) & \textbf{819.32 (3.00)}  \\
	    & average rank    & 4.5    & 5.75   & 3.75   & 2.5    & 3.25   & \textbf{1.25}  \\
		\midrule	\multirow{5}{*}{\textit{walker\_stand}} 
		& thigh\_length  & 487.02 (40.50) & 461.95 (50.03) & 497.38 (43.98) & 488.71 (42.42) & \textbf{511.16 (49.84)} & 505.88 (50.22)  \\
		& torso\_length  & 614.06 (41.10) & 586.16 (68.84) & 586.20 (44.31) & 598.02 (36.17) & 610.93 (38.87) & \textbf{623.56 (46.47)}  \\
	    & joint\_damping  & \textbf{607.24 (115.28)} & 387.89 (109.53) & 443.82 (63.52) & 389.77 (76.96) & 527.87 (116.75) & 514.77 (126.00) \\
	    & contact\_friction & 946.74 (22.20) & \textbf{947.24 (29.16)} & 941.92 (22.21) & 943.11 (21.97) & 940.73 (20.89) & 945.69 (16.02) \\
	    & average rank & 2.50 & 4.75 & 4.25 & 4.25 & 3.00 & \textbf{2.25}    \\
	    \midrule	\multirow{5}{*}{\textit{quadruped\_walk}} 
		& shin\_length & 492.55 (124.44) & 406.77 (90.41) & 503.13 (106.50) & 540.39 (126.22) & 564.60 (135.49) & \textbf{571.85 (64.99)}  \\
		& torso\_density & 471.45 (99.14) & 600.86 (45.21) & 526.22 (79.11) & 442.05 (70.73) & 472.80 (83.22) & \textbf{602.09 (55.36)} \\
	    & joint\_damping & 675.95 (67.23) & 711.54 (91.84) & \textbf{794.56 (65.64)} & 762.50 (79.76) & 658.17 (112.67) & 785.11 (40.79)  \\
	    & contact\_friction & 683.80 (135.80) & 906.92 (100.19) & 770.44 (158.42) & 777.40 (106.04) & 767.80 (109.00) & \textbf{969.73 (21.24)} \\
	    & average rank & 5.25 & 3.50 & 3.00 & 3.75 & 4.25 & \textbf{1.25}   \\
    	\bottomrule 
    		\end{tabular}
    		}
    	\end{center}
    \label{tab:mfrl}
    \end{table*}
   
    We show the \textit{Robust-AUC} and its significance value of \textit{cartpole\_swingup}, \textit{walker\_stand}, \textit{quadruped\_walk} in Table~\ref{tab:mfrl}. Due to page limits, the results of other tasks are presented in Appendix~\ref{sec:constant_perturbation_on_system_parameters}. 
    In addition to calculating \textit{Robust-AUC} under different perturbations, we also rank all algorithms and report the average rank as an overall robustness performance of each task. Notably, \textit{L1-Reg} and \textit{L2-Reg} do not improve on the robustness, and even impair the performance in comparison with the None-Regularized agent on simple domains (\textit{cartpole} and \textit{walker}). In contrast, we observe that both \textit{L2-USR} and \textit{L1-USR} can outperform the default version under certain perturbations (e.g. \textit{L1-USR} in \textit{cartpole\_swingup} for pole\_length, \textit{L2-USR} in \textit{walker\_stand} for thigh\_length); they are, however, not effective for all scenarios. We argue that the underlying reason could be that the fixed shape of the uncertainty set cannot adapt to all perturbed cases. This is supported by the fact that \textit{Adv-USR} achieves the best average rank among all perturbed scenarios, showing the best zero-shot generalization performance in continuous control tasks. 
    For complex tasks like \textit{quadruped\_run}, it is surprising that \textit{L1-Reg} can achieve competitive results compared with \textit{Adv-USR} but with slightly larger uncertainty on the \textit{Robust-AUC}, probably because the sparsity by L1 regularization can reduce redundant features. 
    We also compare the computational cost of all algorithms both empirically and theoretically in Appendix~\ref{sec:computational_cost_all_algorithms}. It is concluded that \textit{Adv-USR} can improve the robustness in most cases without increasing too much computational burden. More limitations on the computational time are discussed in Section~\ref{sec:limitations}. 
    We also carry out two additional testing scenarios to imitate the perturbation in real: all parameters deviate from the nominal values simultaneously (Appendix~\ref{sec:constant_perturbation_on_multiple_system_parameters}) and the perturbed value follows a random walk during the testing episode (Appendix~\ref{sec:noisy_perturbation_on_system_parameters}). \textit{Adv-USR} consistently performs best and is well-adapted to different perturbations. The additional analysis on the training process of \textit{Adv-USR} can be found in Appendix~\ref{sec:training_performances}.

\subsection{Study on Sim-to-real Robotic Task}
\label{sec:study_on_sim_to_real_robotic_task}   

    \begin{wrapfigure}{r}{0.62\columnwidth}
        \begin{subfigure}[b]{0.3\columnwidth}
            \includegraphics[width=1.0\columnwidth]{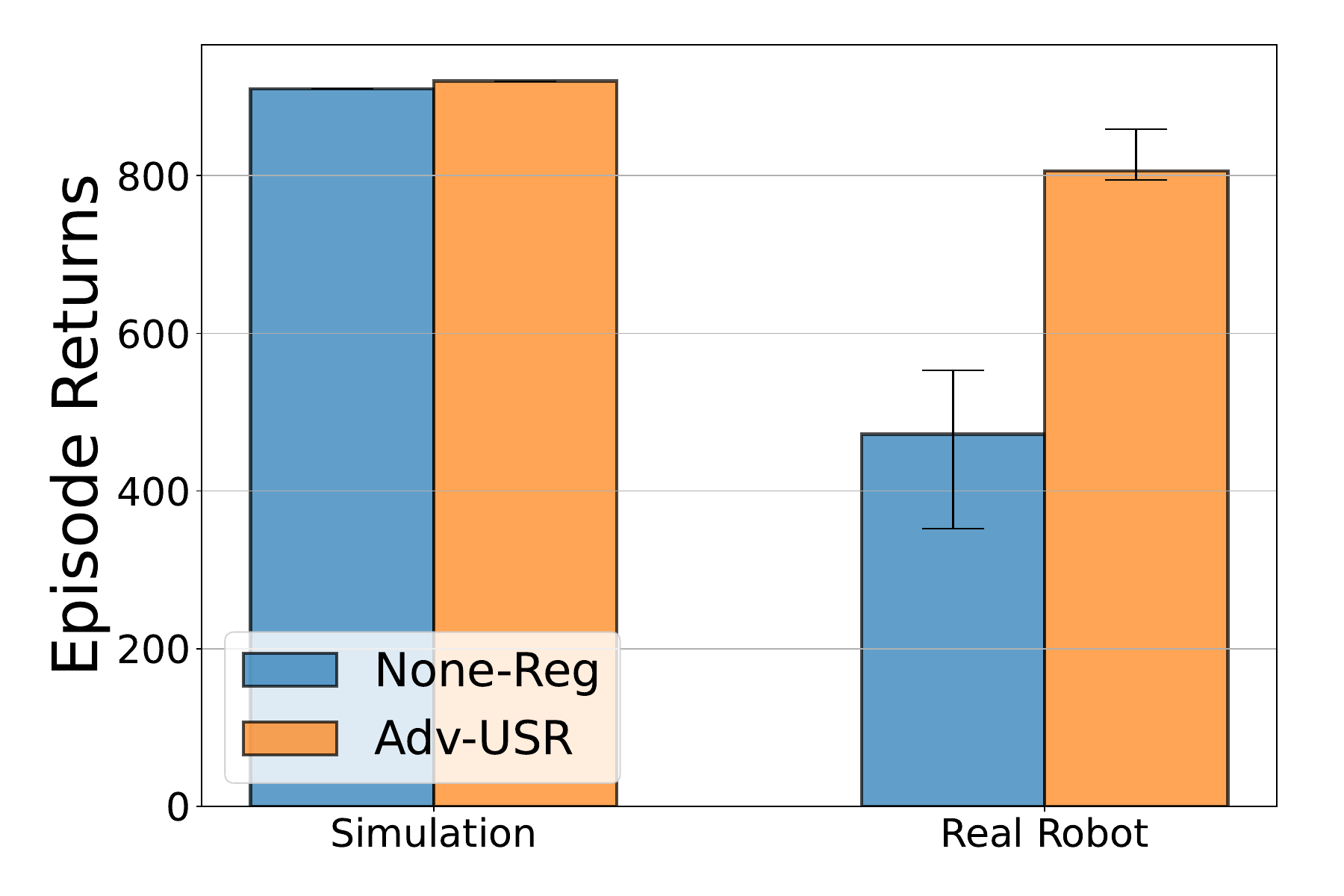}
            \caption{Standing}
            \label{fig:sim_to_real_stand}
        \end{subfigure} 
        \begin{subfigure}[b]{0.3\columnwidth}
            \includegraphics[width=1.0\textwidth]{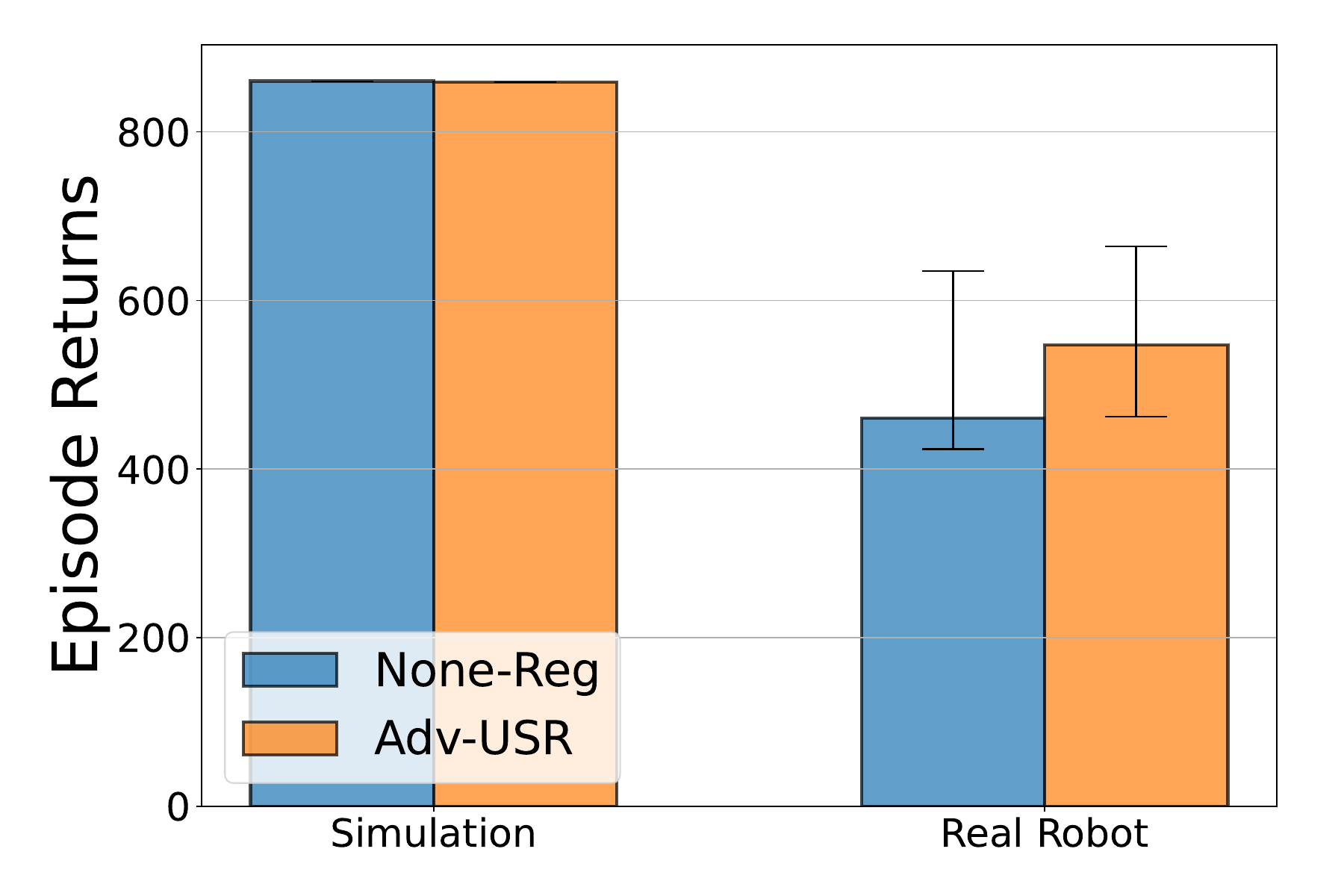}
            \caption{Locomotion}
    	\label{fig:sim_to_real_locomotion}
        \end{subfigure} \\
      \caption{Episode returns of all algorithms on sim-to-real task.}
    \label{fig:sim_to_real_results}
    \end{wrapfigure}
    
    In this section, we further investigate the robustness of \textit{Adv-USR} on a sim-to-real robotic task. Sim-to-real is a commonly adopted setup to apply RL algorithms on real robots, where the agent is first trained in simulation and then transferred to real robots. Unlike the environmental setup in Section~\ref{sec:exp_setup} with additional perturbations during the testing phases, sim-to-real inherently possesses a mismatch between training and testing environments potentially due to: (1) the simulator possesses a simplified dynamics model and suffers from accumulated error~\citep{erez2015simulation} and (2) there are significant differences between simulators and real hardware in robot's parameters, such as a quadruped example in Table~\ref{tab:a1_env}. As a result, this setup is an ideal testbed and practical application of the proposed robust RL algorithm.

    Specifically, we use the Unitree A1 robot~\citep{unitree2018unitree} and the Bullet simulator~\citep{coumans2016pybulleta} as the platform for sim-to-real transfer. The agents learn standing and locomotion in simulations and directly perform on real robots without adaption~\citep{lee2019robusta}. Since other baselines cannot generalize well even in pure simulated RWRL environments, we only compare SAC agents with and without \textit{Adv-USR} method. 
    Most previous works~\cite{kumar2021rmaa, yang2021learning} utilize domain randomization (DR) techniques~\cite{peng2018simtoreal} to deal with sim-to-real mismatches. 
    DR requires training on multiple randomly initialized simulated instances with diverse environmental parameters, expecting the policy to be generalized in the testing environment (real robot). In contrast, \textit{Adv-USR} only requires training on single nominal parameters, which tremendously improves the efficiency and feasibility. Detailed setup of the sim-to-real task is described in Appendix~\ref{sec:extra_setups_of_sim-to-real_task}. We also additionally compare DR theoretically and empirically in Appendix~\ref{sec:comparison_with_domain_randomization}.

    
  
    We run 50 testing trials per baseline and report the episodic returns in Figure~\ref{fig:sim_to_real_results}.
    Both agents succeed in learning a nearly optimal policy in simulation for both tasks. For the standing task, the agent with \textit{Adv-USR} maintains its performance on real robots while the other agent fails to achieve the standing position. For the more complex locomotion task, we notice that the observation is hugely different from simulation and real robots as the position and velocity estimators are noisy and delayed and parameters in Table~\ref{tab:a1_env} are varied. That's why \textit{None-Reg} hardly moves any legs on real robots as in Figure~\ref{fig:a1_real_behavior_default}. But \textit{Adv-USR} reveals a certain level of robustness by iterative bending and moving all legs (Figure \ref{fig:a1_real_behavior_adv}) and surpassing velocity (Figure~\ref{fig:a1_real_velocity}). However, the performance still deteriorates compared with the simulation.
    To alleviate the extreme sim-to-real difference, combining \textit{Adv-USR} and other sim-to-real techniques could be a more powerful strategy, which would be verified in further work.  
    \begin{figure}[htbp]
        \begin{minipage}{0.58\linewidth}
            \begin{subfigure}[b]{1.0\linewidth}
            \centering
            \includegraphics[width=1.0\textwidth]{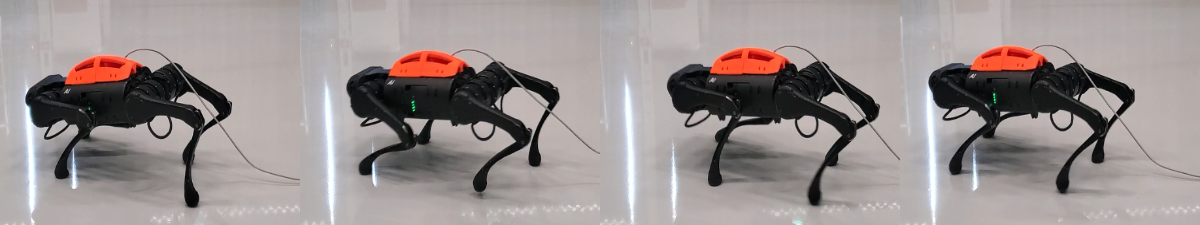}
            \caption{\textit{Adv-USR}}
            \label{fig:a1_real_behavior_adv}
            \end{subfigure} \hfill
            \begin{subfigure}[b]{1.0\linewidth}
                \centering
                \includegraphics[width=1.0\textwidth]{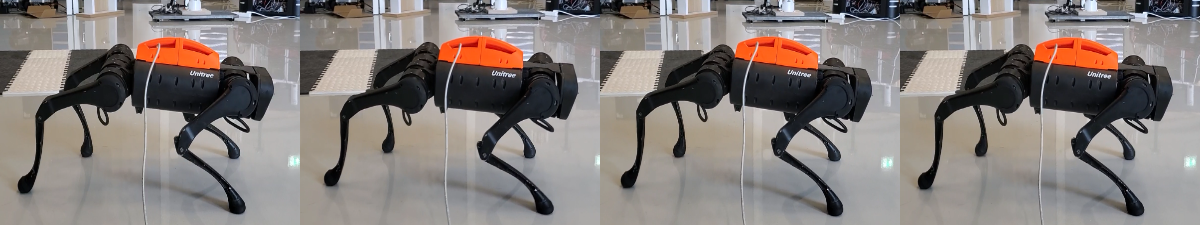}
            \caption{\textit{None-Reg}}
            \label{fig:a1_real_behavior_default}
            \end{subfigure}
        \end{minipage}
        \begin{minipage}{0.35\linewidth}
            \vspace*{\fill}
            \centering
            \begin{subfigure}[b]{1.0\linewidth}
            \centering
            \includegraphics[width=1.0\textwidth]{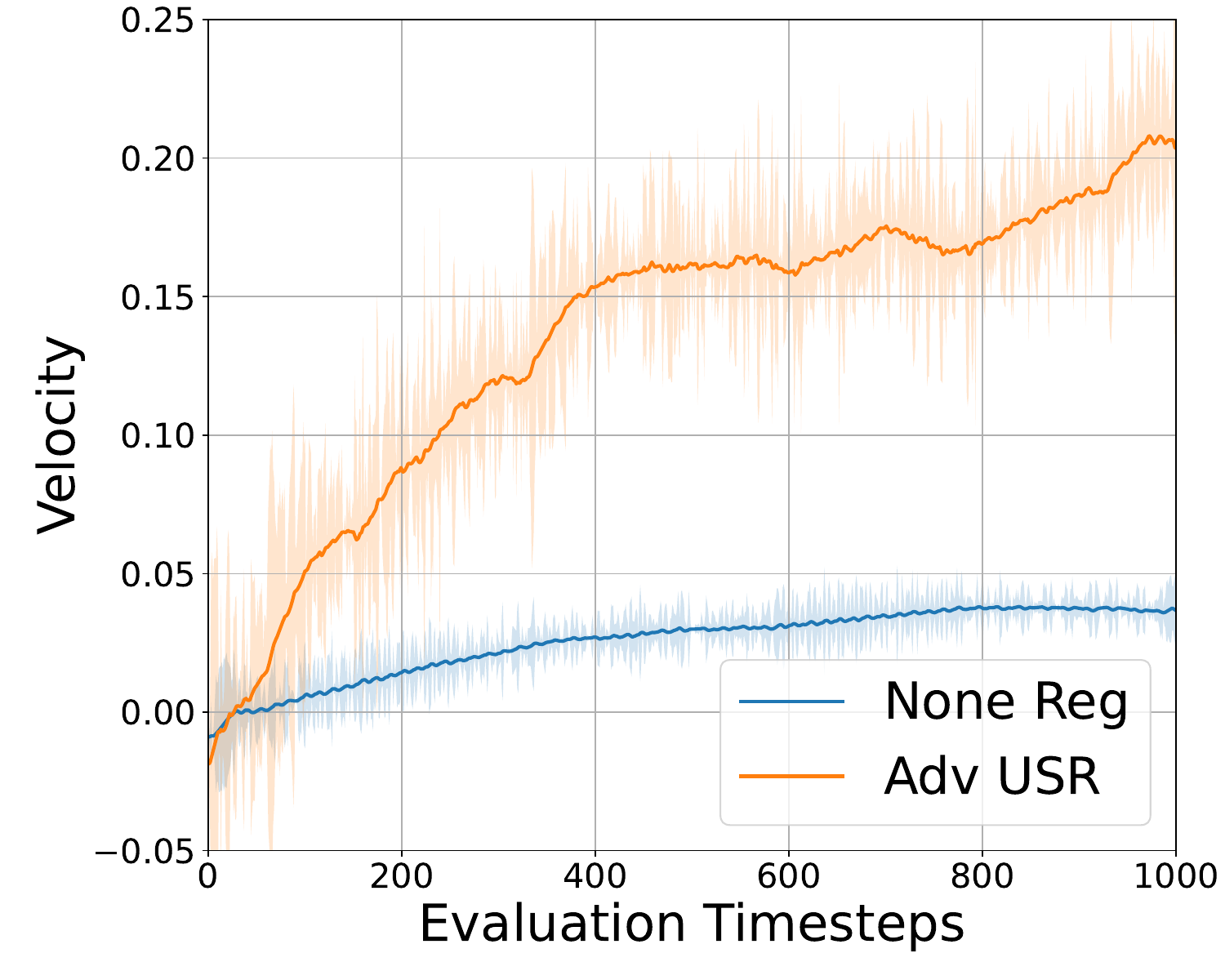}
            \caption{Velocity on Real Robot}
            \label{fig:a1_real_velocity}
            \end{subfigure}
        \end{minipage} 
        \caption{Agent behaviours of all algorithms on locomotion task.}
        \label{fig:a1_real_behavior}
    \end{figure}
    
\section{Conclusion}
\label{sec:conclusion}
    In this paper, we adopt the robustness-regularization duality method to design new regularizers for continuous control problems to improve the robustness and generalization of RL algorithms. Furthermore, to deal with unknown uncertainty sets, we design an adversarial uncertainty set depending on the learned action state value function and implement it as a new regularizer. The proposed method shows great promise regarding generalization and robustness under environmental perturbations in both simulated and realistic robotic tasks. Noticeably, it does not require training in multiple diverse environments or fine-tuning in testing environments, which makes it an efficient and valuable add-on to RL for robot learning.

\section{Limitations}
\label{sec:limitations}
    The limitations of this work are discussed as follows. First, although the computational cost of \textit{Adv-USR} is acceptable~\citep{nguyen-tuong2009model} for the local Gaussian model owing to low-dimensional proprioceptive observations (e.g. position, velocity) in the experiments, it is a critical factor when \textit{Adv-USR} is applied to more complex dynamics with millions of parameters (i.e. common in recent offline and model-based RL research~\cite{janner2019when}). Therefore, methods to automatically detect critical variables may be required in future work. On the other hand, it is probable to extend \textit{Adv-USR} with domain randomization to tackle more sophistical robotic tasks in the future.


\acknowledgments{This research receives funding from the European Union’s Horizon 2020 research and innovation program under the Marie Skłodowska-Curie grant agreement No. 953348 (ELO-X). Jianhong Wang is fully supported by UKRI Turing AI World-Leading Researcher Fellowship, EP/W002973/1. The authors thank Jasper Hoffman, Baohe Zhang for the inspiring discussions.}


\bibliography{ref}  

\newpage
\appendix
\onecolumn
\label{sec:appendix}

\section{Related Work}
\label{sec:related_work}

    Robust Reinforcement Learning (Robust RL) has recently become a popular topic~\citep{iyengar2005robust, dulac-arnold2019challengesa, kirk2023survey, moos2022robust}, due to its effectiveness in tackling perturbations. Besides the transition perturbation in this paper, there are other branches relating to action, state and reward.  
    We will briefly discuss them in the following paragraphs. Additionally, we will discuss the relation of Robust RL, sim-to-real,  Bayesian RL and Adaptive RL approaches, which are also important topics in robot learning

   \textbf{Action Perturbation.} 
   Early works in Robust RL concentrated on action space perturbations. \citet{pinto2017robusta} first proposed an adversarial agent perturbing the action of the principle agent, training both alternately in a mini-max style. \citet{tessler2019actiona} later performed action perturbations with probability $\alpha$ to simulate abrupt interruptions in the real world. Afterwards, \citet{kamalaruban2020robusta} analyzed this mini-max problem from a game-theoretic perspective and claimed that an adversary with mixed strategy converges to a mixed Nash Equilibrium. Similarly, \citet{vinitsky2020robusta} involved multiple adversarial agents to augment the robustness, which can also be explained in the view of a mixed strategy. 

   \textbf{State Perturbation.} 
   State perturbation can lead to the change of state from $s$ to $s_p$, and thus might worsen an agent's policy $\pi(a|s)$~\citep{pattanaik2018robust}. \citet{zhang2020robust, oikarinen2021robusta} both assume an $L_p$-norm uncertainty set on the state space (inspired by the idea of adversarial attacks widely used in computer vision~\citep{szegedy2014intriguinga}) and propose an auxiliary loss to encourage learning to resist such attacks. It is worth noting that state perturbation is a special case of transition perturbation, which can be covered by the framework proposed in this paper, as further explained in Appendix~\ref{sec:comparison_with_state_perturbation}. 
   
   \textbf{Reward Perturbation.} 
   The robustness-regularization duality has been widely studied, especially when considering reward perturbations~\citep{husain2021regularized, eysenbach2021maximum, brekelmans2022youra}. One reason is that the policy regularizer is closely related to a perturbation on the reward function without the need for a rectangular uncertainty assumption. However, it restricts the scope of these works as reward perturbation, since it can be shown to be a particular case of transition perturbation by augmenting the reward value in the state~\citep{eysenbach2021maximum}. Besides, the majority of works focus on the analysis of regularization to robustness, which can only analyze the effect of existing regularizers instead of deriving novel regularizers for robustness as in the work we present here. 

    \textbf{Sim-to-real.}
    Sim-to-real is a key research topic in robot learning. Compared to the Robust RL problem, it aims to learn a robust policy from simulations for generalization in real-world environments. Domain randomization is a common approach to ease this mismatch in sim-to-real problems~\citep{openai2018learning, peng2018simtoreal}. However, \citet{mankowitz2019robust} has demonstrated that it actually optimizes the average case of the environment rather than the worst-case scenario (as seen in our research), which fails to perform robustly during testing. More recent active domain randomization methods~\citep{mehta2020active} resolve this flaw by automatically selecting difficult environments during the training process. The idea of learning an adversarial uncertainty set considered in this paper can be seen as a strategy to actively search for more valuable environments for training.

    \textbf{Bayesian RL.}
    One commonality between Bayesian RL and Robust RL is that they both store uncertainties over the environmental parameter (posterior distribution $q(w)$ in Bayesian RL and uncertainty set $\Omega_w$ in Robust RL). Uncertainties learned in Bayesian RL can benefit Robust RL in two ways: (1) Robust RL can define an uncertainty set $\Omega_w = \{w: q(w) > \alpha \}$ to learn a robust policy that can tolerate model errors, which is attractive for offline RL and model-based RL; (2) A soft robust objective with respect to the distribution $q(w)$ can ease the conservative behaviours caused by the worst-case scenario~\citep{derman2018softrobusta}. 
 
    \textbf{Adaptive RL.}
    Adaptive RL (often referred as Meta RL~\cite{wang2017learninga}) is another popular technique to deal with the perturbations in environments parallel to Robust RL introduced in this paper. The main difference between Robust RL and Adaptive RL is whether policy parameters are allowed to change when environmental parameters vary. Robust RL is a zero-shot learning technique aiming to learn one single robust policy that can be applied to various perturbed environments. Adaptive RL is a few-shot learning technique aiming to quickly change policy to adapt to the changing environments. These two techniques can be combined to increase the robustness in real-world robots. One can first use Robust RL to learn a base policy as the warm start and fine-tune the policy on certain perturbed environments with Adaptive RL.

\section{Extra Algorithm Details}
\subsection{Proof of Uncertainty Set Regularized Robust Bellman Equation}
\label{sec:proof_of_usr-rbe}
    The proof is as follows:
    \begin{equation} 
    \label{eq:rbe_proof}
    \begin{split}
         Q^{\pi}(s, a) &=  r(s, a) + \gamma \min_{w\in\Omega_w} \int_{s'} P(s'|s,a; w) V^{\pi}(s') ds' \\
          &= r(s, a) + \gamma \int_{s'} P(s'|s,a; \bar{w}) V^{\pi}(s') ds' + \gamma \min_{w\in\Omega_w}\int_{s'}  (w-\bar{w}) ^T \nabla_wP(s'|s,a; \bar{w})  V^{\pi}(s') ds' \\
          &= r(s, a) + \gamma \int_{s'} P(s'|s,a; \bar{w}) V^{\pi}(s') ds' - \gamma \left[\max_{\tilde{w}} \int_{s'}  - \tilde{w} ^T \nabla_wP(s'|s,a; \bar{w})  V^{\pi}(s') ds'  -  \delta_{\Omega_{\tilde{w}}}(\tilde{w}) \right] \\
          &= r(s, a) + \gamma \int_{s'} P(s'|s,a; \bar{w}) V^{\pi}(s') ds' - \gamma \int_{s'} \delta^*_{\Omega_{\tilde{w}}} \left[-\nabla_{w}P(s'|s,a; \bar{w}) V^{\pi}(s') \right] ds' \\
    \end{split}
    \end{equation}
    
    The second line utilizes the first-order Taylor Expansion at $\bar{w}$. The third line reformulates the minimization on $w$ to maximization on $\tilde{w}$ and adds an indicator function as a hard constraint on $\tilde{w}$. The last line directly follows the definition of convex conjugate function.

\subsection{Convergence of Robust Policy Evaluation}
\label{sec:convergence_of_robust_policy_evaluation}

    We now prove that the Bellman operator with an extra regularizer term as the \textit{policy evaluation} stage can converge to the robust action value function given some specific conditions. 
    
    Since $V^{\pi}(s) = \int_{a} \pi(a|s) Q^{\pi}(s, a) da$, we define the equivalent operator with respect to $Q^{\pi}$ to the one proposed in this paper as $T$ such that 
    \begin{equation}
        \begin{split}
            T Q^{\pi}(s, a) &= r(s, a) + \gamma \int_{s'}P(s'|s,a;\bar{w}) \int_{a'} Q^{\pi}(s', a') da' ds' - \alpha \int_{s'} \|\nabla_wP(s'|s,a;\bar{w}) \int_{a'} Q^{\pi}(s', a') da' \|_2 ds' \\
            &= \underbrace{r(s, a) + \gamma \int_{s'}P(s'|s,a;\bar{w}) V^{\pi}(s')ds' - \alpha \int_{s'} \|\nabla_wP(s'|s,a;\bar{w})V^{\pi}(s')\|_2 ds'}_{\text{The operator proposed in this paper.}}.
        \end{split}
    \end{equation}
    
    To ease the derivation, we will not expand the term $V^{\pi}$ as the form of $Q^{\pi}$ at the beginning in the following procedures.
    
    \begin{align*}
        || T Q^{\pi}_{1} -  T Q^{\pi}_{2} ||_{\infty} &= \max_{s,a} \bigg|  r(s, a) + \gamma   \int_{s'}P(s'|s,a;\bar{w}) V^{\pi}_{1}(s')ds' - \alpha \int_{s'} \|\nabla_wP(s'|s,a;\bar{w})V^{\pi}_{1}(s')\|_2 ds' \\
        &-  r(s, a) - \gamma   \int_{s'}P(s'|s,a;\bar{w}) V^{\pi}_{2}(s')ds' + \alpha \int_{s'} \|\nabla_wP(s'|s,a;\bar{w})V^{\pi}_{2}(s')\|_2 ds' \bigg| \\
        &= \max_{s,a} \bigg| \gamma   \int_{s'}P(s'|s,a;\bar{w}) \Big[ V^{\pi}_{1}(s') - V^{\pi}_{2}(s') \Big] ds'+ \\
        &\alpha \int_{s'} \Big[ \|\nabla_wP(s'|s,a;\bar{w})V^{\pi}_{2}(s')\|_2 - \|\nabla_wP(s'|s,a;\bar{w})V^{\pi}_{1}(s')\|_2 \Big] ds' \bigg|\\
        &\leq \underbrace{\gamma \max_{s,a} \bigg| \int_{s'}P(s'|s,a;\bar{w}) \Big[ V^{\pi}_{1}(s') - V^{\pi}_{2}(s') \Big] ds' \bigg|}_{\text{Partition 1.}} + \\
        & \underbrace{\alpha \max_{s, a} \bigg| \int_{s'} \Big[ \|\nabla_wP(s'|s,a;\bar{w})V^{\pi}_{2}(s')\|_2 - \|\nabla_wP(s'|s,a;\bar{w})V^{\pi}_{1}(s')\|_2 \Big] ds'}_{\text{Partition 2.}} \bigg|
    \end{align*}
    
    Since Partition 1 is a conventional Bellman operator, following the contraction mapping property we can directly write the inequality such that
    \begin{equation}
        \gamma \max_{s, a} \bigg| \int_{s'}P(s'|s,a;\bar{w}) \Big[ V^{\pi}_{1}(s') - V^{\pi}_{2}(s') \Big] ds' \bigg| \leq \gamma || V^{\pi}_{1}(s') - V^{\pi}_{2}(s') ||_{\infty}.
    \label{eq:part1}
    \end{equation}
    
    Next, we will process Partition 2. The details are as follows:
    \begin{align}
        &\quad \alpha \max_{s, a} \bigg| \int_{s'} \Big[ \|\nabla_wP(s'|s,a;\bar{w})V^{\pi}_{2}(s')\|_2 - \|\nabla_wP(s'|s,a;\bar{w})V^{\pi}_{1}(s')\|_2 \Big] ds' \bigg| \nonumber \\
        &\leq \alpha \max_{s, a} \bigg| \int_{s'} \bigg\| 
        \nabla_wP(s'|s,a;\bar{w}) \Big[ V^{\pi}_{2}(s')  - V^{\pi}_{1}(s') \Big] \bigg\|_{2} ds' \bigg| \nonumber \\
        &\leq \alpha \max_{s, a} \bigg| \int_{s'} \|\nabla_wP(s'|s,a;\bar{w})\|_2 | V^{\pi}_{2}(s') - V^{\pi}_{1}(s') | ds' \bigg| \nonumber \\
        &\leq \alpha \max_{s, a} \bigg| \max_{s} | V^{\pi}_{2}(s') - V^{\pi}_{1}(s') | \int_{s'} \|\nabla_wP(s'|s,a;\bar{w})\|_2 ds' \bigg| \nonumber \\
        &\leq \underbrace{\alpha \max_{s, a} \bigg| \int_{s'} \|\nabla_wP(s'|s,a;\bar{w})\|_2 ds' \bigg|}_{:= \delta} \max_{s'} | V^{\pi}_{1}(s') - V^{\pi}_{2}(s') | \nonumber \\
        &= \delta || V^{\pi}_{1} - V^{\pi}_{2} ||_{\infty}.
    \label{eq:part2}
    \end{align}
    
    Combining the results of Eq.\ref{eq:part1} and Eq.\ref{eq:part2} and expanding the term $V^{\pi}$, we can directly get that
    \begin{equation*}
        \begin{split}
            ||  T Q^{\pi}_{1}-  T Q^{\pi}_{2} ||_{\infty} &\leq (\gamma + \delta) || V^{\pi}_{1} - V^{\pi}_{2} ||_{\infty} \\
            &\leq (\gamma + \delta) \max_{s'} | V^{\pi}_{1}(s') - V^{\pi}_{2}(s') | \\
            &\leq (\gamma + \delta) \max_{s', a'} | Q^{\pi}_{1}(s', a') - Q^{\pi}_{2}(s', a') | \\
            &\leq (\gamma + \delta) || Q^{\pi}_{1} - Q^{\pi}_{2} ||_{\infty}
        \end{split}
    \label{eq:part3}
    \end{equation*}
    
    To enable $T$ to be a contraction mapping in order to converge to the exact robust Q-values, we have to let $0 \leq \gamma + \delta \leq 1$. In more details, the norm of the gradient of transition function with respect to the uncertainty set (i.e., $\|\nabla_wP(s'|s,a;\bar{w})\|_2$ inside $\delta$) is critical to the convergence of robust Q-values under some robust policy. Given the above conditions, the robust value function will finally converge.

\subsection{Algorithm on Incorporating Uncertainty Set Regularizer into Soft Actor Critick}
\label{sec:algorithm_on_incorporating_usr_into_sac}

    The proposed Uncertainty Set Regularizer method is flexible to be plugged into any existing RL frameworks as introduced in Section~\ref{sec:usr_rrl}. Here, we include a specific implementation on Soft Actor Critic algorithm, see Algorithm~\ref{alg:psrrl}. 
    
    \begin{algorithm*}[ht!]
    \caption{Uncertainty Set Regularized Robust Soft Actor Critic}
        \begin{algorithmic}[1]
            \State Input: initial State $s$, action value $Q(s,a;\theta)$'s parameters $\theta_1$, $\theta_2$, policy $\pi(a|s;\phi)$'s parameters $\phi$, replay buffer $\mathcal{D}=\emptyset$, transition nominal parameters $\bar{w}$, value target update rate $\rho$
            \State Set target value parameters $\theta_{tar,1} \leftarrow \theta_1$, $\theta_{tar,2} \leftarrow \theta_2$
            \State \textbf{repeat} 
            \State \quad Execute $a \sim \pi(a|s;\phi)$  in the environment
            \State \quad Observe reward $r$ and next State $s'$
            \State \quad $\mathcal{D} \leftarrow \mathcal{D} \cup \{s, a, r, s'\} $
            \State \quad $s \leftarrow s'$
            \State  \quad\textbf{for} each gradient step
            \textbf{do}
            \State \quad\quad Randomly sample a batch transitions, $
            \mathcal{B}=\{s,a,r,s'\}$ from $\mathcal{D}$
            \State \quad\quad Construct adversarial uncertainty set $\Omega_{\tilde{w}}$ as introduced in Section \ref{sec:adv} (for adversarial uncertainty set only)
            \State \quad\quad Compute robust value target $y(s,a,r,s',\bar{w})$ by calculating the RHS of Equation \ref{eq:robust_bellman_equation} 
            \State \quad\quad Update action state value parameters $\theta_i$ for $i \in \{1,2\} $ by minimizing mean squared loss to the target:  
            \State \quad\quad\quad\quad $\nabla_{\theta_i}\frac{1}{|\mathcal{B}|} \sum_{(s,a,r,s')\in \mathcal{B}}(Q(s,a;\theta_i) - y(s,a,r,s',\bar{w}))^2$   \quad for $i=1,2$
            \State \quad\quad Update policy parameters $\phi$ by policy gradient:
                \State \quad\quad\quad\quad $\nabla_{\phi}\frac{1}{|\mathcal{B}|} \sum_{(s)\in \mathcal{B}} \left(\min_{i=1,2} Q(s, \tilde{a};\theta_i) - \alpha \log \pi(\tilde{a}|s;\phi) \right)$ 
                \State \quad\quad where $\tilde{a}$ is sampled from $\pi(a|s;\phi)$ and differentiable w.r.t. $\phi$
            \State \quad\quad Update target value parameters:
                \State \quad\quad\quad\quad $\theta_{tar,i} \leftarrow (1-\rho)\theta_{tar,i} +  \rho \theta_i$ \quad for $i=1,2$
            \State  \quad\textbf{end for}
            \State\textbf{until convergence}
        \end{algorithmic}
    \label{alg:psrrl}
    \end{algorithm*}    



            

\subsection{Algorithm on Generation of Adversarial Uncertainty Set}
\label{sec:algorithm_on_generation_of_adversarial_uncertainty_set}

     This is the pseudo-code to generate the adversarial uncertainty set introduced in the paper. 
     
    \begin{algorithm}[ht]
        \caption{Generation of Adversarial Uncertainty Set}
        \begin{algorithmic}[1]
            \State Input:  current state $s$, current action $a$, value function $V(s;\theta)$, next state distribution $P(\cdot|s,a;\bar{w})=\mathcal{N}(\mu(s,a;\bar{w}), \Sigma(s,a;\bar{w}))$
            \State \textbf{do} 
            \State \quad Sample $\sigma \sim \mathcal{N}(0, I)$ and calculate next state $s'=\mu(s,a;\bar{w}) + \Sigma(s,a;\bar{w}) \sigma$ (the reparameterization trick to sample $s' \sim P(\cdot|s,a;\bar{w})$);
            \State \quad Forward pass to calculate the next state value $V(s';\theta)$;
             \State \quad Backward pass to compute the derivative $g(\bar{w}) = \nabla_{w}V(s';\theta)=\frac{\partial V(s';\theta)}{\partial s'} \frac{\partial \mu(s,a;\bar{w}) + \Sigma(s,a;\bar{w}) \sigma}{\partial w}$;
              \State \quad Normalize the derivative by $d(\bar{w})= g(\bar{w})/[\sum_i^W g(\bar{w})_i^2]^{0.5}$; 
              \State \quad Generate the adversarial uncertainty set $\Omega_w = \{  \bar{w} + \alpha \tilde{w} : \| \tilde{w} / d(\bar{w})\|_2 \le 1 \}$.
            \State \textbf{done}
        \end{algorithmic}
    \label{alg:usr}
    \end{algorithm}

\section{Extra Experimental Setups}
\label{sec:extra_experimental_setups}

\subsection{Process Map of Experiments}
\label{sec:process_map_of_experiments}

    \begin{figure}[hbt]
        \centering
        \includegraphics[width=0.82\textwidth]{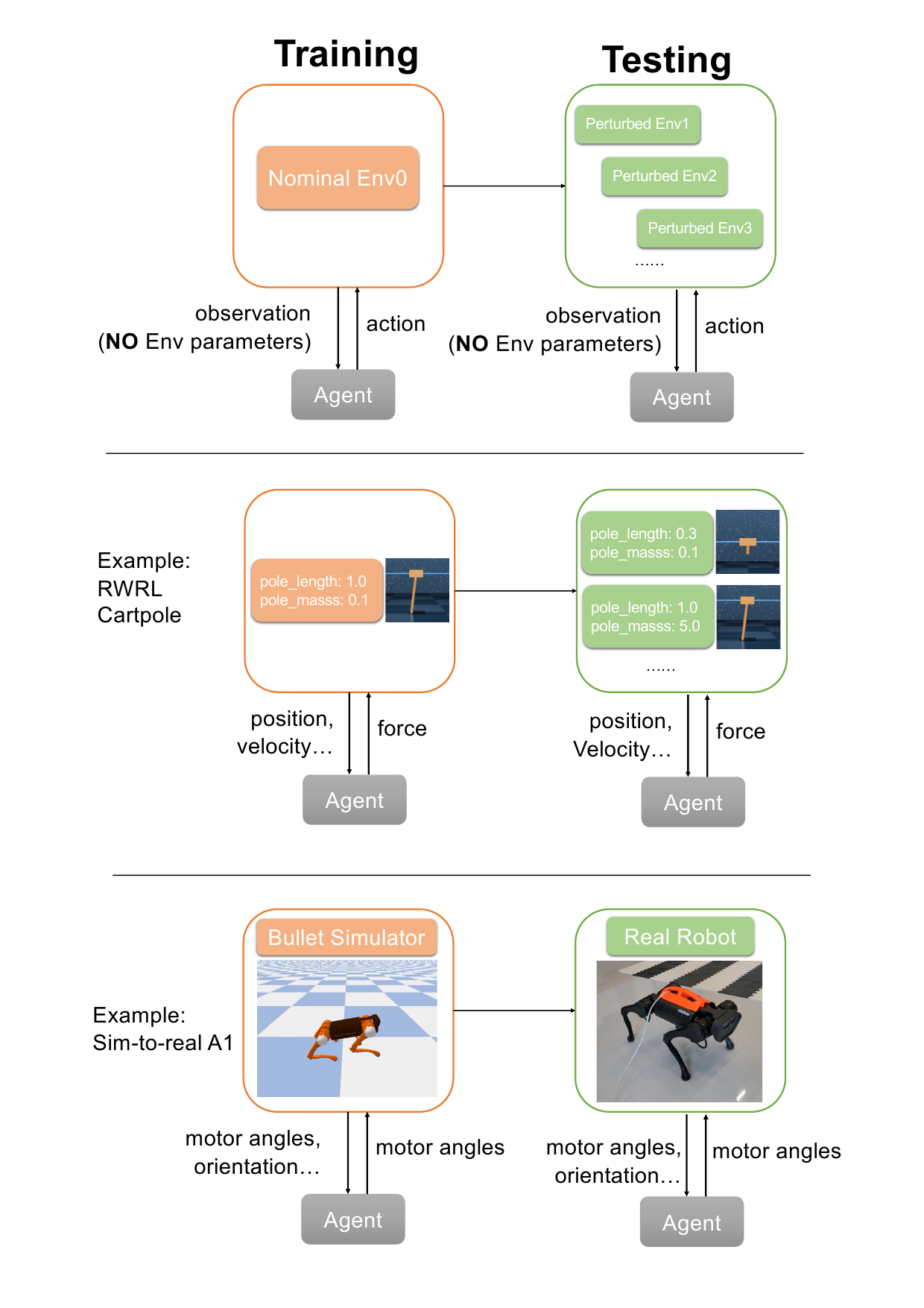}
        \caption{Process map of experiments and specific examples.}
        \label{fig:process_map}
    \end{figure}
    
    We would like to clarify again the experimental setup in this paper. We drew a process map~\ref{fig:process_map} to facilitate the understanding. During both training and testing, the agents can only acquire observations (e.g. position, velocity) from the environment without knowing the environmental parameters (e.g. pole length, robot mass). During training, the agents can only interact with a single set of environment parameters (nominal environment), so it's impossible to predict the pattern of the perturbation. During testing, all trained agents are fixed and tested on various perturbed environment parameters. So to speak, the agents have to learn a single policy on an unperturbed environment that can adapt to various perturbed environments, which might be far more challenging than expected. \textit{Adv-USR} shows a stable improvement over all other baselines. \\
    
\subsection{RWRL Benchmarks}
\label{sec:rwrl}

    In this paper, we conduct experiments on six tasks: \textit{cartpole\_balance}, \textit{cartpole\_swingup}, \textit{walker\_stand}, \textit{walker\_walk}, \textit{quadruped\_walk}, \textit{quadruped\_run}. All tasks involve a domain and a movement. For example, \textit{cartpole\_balance} task represents the balance movement on cartpole domain. In this paper, we consider 3 domains with 2 movements each. 
    
    The 3 domains are cartpole, walker and quadruped respectively: 
    \begin{itemize}
        \item \textbf{Cartpole} has an unactuated pole based on a cart. One can apply a one-direction force to balance the pole. For \textit{cartpole\_balance} task, the pole starts near the upright while in \textit{cartpole\_swingup} task the pole starts pointing down. 
        \item \textbf{Walker} is a planar walker to be controlled in 6 dimensions. The \textit{walker\_stand} task requires an upright torso and some minimal torso height. The \textit{walker\_walk} task encourages a forward velocity.
        \item \textbf{Quadruped} is a generic quadruped with a more complex state and action space than cartpole and walker. The \textit{quadruped\_walk} and \textit{quadruped\_run} tasks encourage a different level of forward speed.
    \end{itemize}

    For a detailed description of these tasks, please refer \textsc{dm\_control}~\citep{tassa2018deepminda}. 
    For each task, we follow the RWRL's setup by selecting $4$ environmental variables and perturbing them during the testing phase. The perturbed variables and their value ranges can be found in Table~\ref{tab:rwrl_env}. We also report the nominal values of these perturbed variables to indicate the differences between training and testing environments. Notably, all tasks are run for a maximum of 1000 steps and the max return is 1000. 
    
    \begin{table*}[ht]
    	\caption{Tasks in RWRL benchmark and the perturbed variables.}
    	\begin{center}
    	 \scalebox{0.7}{
    		\begin{tabular}{cccccc}
    			\toprule
    			\textbf{Task Name} &  \textbf{Observation Dimension} & \textbf{Action Dimension} & \textbf{Perturbed Variables} & \textbf{Perturbed Range} & \textbf{Nominal Value}  \\
    			\midrule	\multirow{4}{*}{\shortstack{\textit{cartpole\_balance}\\\textit{cartpole\_swingup}}} & \multirow{4}{*}{5}   & \multirow{4}{*}{1} 
    			& pole\_length  & [0.3, 3.0] & 1.0  \\
    			&&& pole\_mass    & [0.1, 10.0] & 0.1 \\
    			&&& joint\_damping   & [2e-6, 2e-1] & 2e-6 \\
    			&&& slider\_damping    & [5e-4, 3.0] & 5e-4 \\
    			 \midrule
    			\multirow{4}{*}{\shortstack{\textit{walker\_stand}\\\textit{walker\_walk}}} & \multirow{4}{*}{24}  & \multirow{4}{*}{6} & tigh\_length  & [0.1, 0.7] & 0.225 \\
    		    &&& torso\_length & [0.1, 0.7] & 0.3 \\
    			&&& joint\_damping   & [0.1, 10.0] & 0.1  \\
    			&&& contact\_friction & [0.01, 2.0] & 0.7  \\
    			\midrule
    		   \multirow{4}{*}{\shortstack{\textit{quadruped\_walk}\\\textit{quadruped\_run}}} & \multirow{4}{*}{78}  & \multirow{4}{*}{12}
    			& shin\_length  & [0.25, 2.0] & 0.25  \\
    			&&& torso\_density & [500, 10000] & 1000 \\
    		    &&& joint\_damping   & [10, 150] & 30  \\
    			&&& contact\_friction & [0.1, 4.5] & 1.5 \\
    			\bottomrule
    		\end{tabular}
    		}
    	\end{center}
    	\label{tab:rwrl_env}
    \end{table*}
    
\subsection{Practical Implementations of \textit{Robust-AUC}}
\label{sec:evaluation_details}

    To calculate \textit{Robust-AUC} in RWRL experiments, each agent is trained with 5 random seeds. During the testing phase, for each environmental variable $P$, we uniformly sample 20 perturbed values $v$ in the range of $[v_{min}, v_{max}]$. For each value $v$, the environment variable $P$ is first modified to value $v$ and the agent is tested for 100 episodes (20 episodes per seed). We then select the $10\%$-quantile \footnote{$10\%$-quantile as a worst-case performance evaluates the robustness of RL algorithms more reasonably than common metrics.} as the return $r$ at value $v$. By doing so we roughly have an approximated curve $C(v, r)$ and can calculate \textit{Robust-AUC} defined previously. We also report the area between $5\%$-quantile and $15\%$-quantile as the statistical uncertainty of the reported \textit{Robust-AUC}.
    
\subsection{Model Structure}
\label{sec:model_structure}
    The model structure for all experimental baselines is based on the \citet{pytorch_sac}'s implementation of Soft Actor Critic (SAC) \citep{haarnoja2018softa} algorithm. The actor network is a $3$-layer feed-forward network with $1024$ hidden units and outputs the Gaussian distribution of action. The critic network adopts the double-Q structure \citep{vanhasselt2012reinforcement} and also has $3$ hidden layers with $1024$ hidden units on each layer, but only outputs a real number as the action State value. 
    
\subsection{Hyperparameters}
\label{sec:hyperparameters}
    To compare all algorithms fairly, we set all hyperparameters equally except the robust method and its coefficient. All algorithms are trained with Adam optimizer \citep{kingma2014autoencodinga}. The full hyperparameters are shown in Table~\ref{tab:hyperparameters}. For regularizer coefficients of all robust update methods, please see Table~\ref{tab:robust_coef}. Notably, the principle to choose the coefficient is to increase the value until the performance on the nominal environment drops. In the future, it can be automatically tuned by learning an unregularized value function and comparing the difference between robust value and unregularized value.
    All experiments are carried out on NVIDIA GeForce RTX 2080 Ti and Pytorch 1.10.1. 
    
    \begin{table*}[ht!]
    	\caption{Hyperparameters of Robust RL algorithms.}
    	\begin{center}
    	\begin{small}
    		\begin{sc}
    		  \scalebox{0.75}{
			\begin{tabular}{lcll}
				\toprule
				\textbf{Hyperparameters} & \textbf{Value} & \textbf{Description}  \\
				\midrule
			batch size & 1024 & The number of transitions for each update \\
			discount factor $\gamma$ & 0.99 & The importance of future rewards  \\
			replay buffer size & 1e6 & The maximum number of transitions stored in memory \\
			episode length & 1e3 & The maximum time steps per episode \\
			max training step & 1e6 & The number of training steps \\
			random steps & 5000 & The number of randomly acting steps at the beginning \\
			actor learning rate & 1e-4 & The learning rate for actor network \\
			actor update frequency & 1 & The frequency for updating actor network \\
			actor log std bounds & [-5, 2] & The output bound of log standard deviation \\
			critic learning rate & 1e-4 & The learning rate for critic network \\ 
			critic target update frequency & 2 & The frequency for updating critic target network \\
			critic target update coefficient & 0.005 & The update coefficient of critic target network for soft learning \\
			init temperature & 0.1 & initial temperature of actor's output for exploration \\
			temperature learning rate & 1e-4 & The learning rate for updating the policy entropy \\
			sample size & 1 & the sample size to approximate the robust regularizor \\
				\bottomrule 
			\end{tabular}
			}
    			\end{sc}
    		\end{small}
    	\end{center}
    	\label{tab:hyperparameters}
    \end{table*}
    
    \begin{table*}[ht]
    	\caption{Regularization coefficients of Robust RL algorithms.}
    	\begin{center}
    	 \scalebox{1.0}{
    		\begin{tabular}{lcccccc}
    			\toprule
    			\multirow{2}{*}{\textbf{Task Name}} & 
    			\multicolumn{6}{c}{\textbf{Algorithms}} \\
    			& \textit{None-Reg} & \textit{L1-Reg} & \textit{L2-Reg} & \textit{L1-USR} & \textit{L2-USR} & \textit{Adv-USR} \\
    			\midrule	
    			\textit{cartpole\_balance} & - & 1e-5 & 1e-4 & 5e-5 & 1e-4& 1e-5 \\
    			\textit{cartpole\_swingup} & - & 1e-5 & 1e-4 & 1e-4& 1e-4& 1e-4 \\
    			\textit{walker\_stand} & - & 1e-4 & 1e-4 & 5e-5 & 1e-4& 1e-4 \\
    			\textit{walker\_walk}  & - & 1e-4 & 1e-4  & 1e-4 & 1e-4& 5e-4\\
    			\textit{quadruped\_walk} & - & 1e-5 & 1e-4 & 1e-4 & 1e-4 & 5e-4\\
    			\textit{quadruped\_run}  & - & 1e-4 & 1e-4 & 5e-5 & 1e-4 & 7e-5 \\
    			\bottomrule
    		\end{tabular}
    		}
    	\end{center}
    \label{tab:robust_coef}
    \end{table*}

\subsection{Extra Setups of Sim-to-real Task}
\label{sec:extra_setups_of_sim-to-real_task}

    We use the Unitree A1 robot~\citep{unitree2018unitree} and the Bullet simulator~\citep{coumans2016pybulleta} as the platform for sim-to-real transfer. The Unitree A1 is a quadruped robot with $12$ motors ($3$ motors per leg). The Bullet simulator is a popular simulation tool specially designed for robotics. 

    It is well known that there is a non-negligible difference between simulators and real robots due to:(1) the simulator possesses a simplified dynamics model and suffers from accumulated error~\citep{erez2015simulation} and (2) there are significant differences between simulators and real hardware in robot's parameters, such as a quadruped example in Table~\ref{tab:a1_env}. Therefore, training policies in simulation and applying them to real robots (sim-to-real) is a challenging task for robotics.
    
    Specifically, we perform 2 sim-to-real tasks: standing and locomotion following previous work~\cite{lee2019robusta}. The detailed description of the experiment is as follows: 
    

    \paragraph{Observation.}
        The observation contains the following features of 3 steps: motor angles (12 dim), root orientation (4 dim, roll, pitch, roll velocity, pitch velocity), and previous actions (12 dim). So the observation space is 84 dimensions. 

    \paragraph{Action.}
        All 12 motors can be controlled in the position control mode, which is further converted to torque with an internal PD controller. The action space for each leg is defined as $[p-o, p+o]$. The specific values for different parts (hip, upper leg, knee) in the standing task are $p = [0.00, 1.6, -1.8], o = [0.8,2.6,0.8]$. The values in the locomotion task are $p = [0.05, 0.7, -1.4], o = [0.2,0.4,0.4]$. 

    \paragraph{Reward.}
        For the standing task, the reward consists of 3 parts:  
        $r(s, a) = 0.2 * r_{\textsc{height}} + 0.6 * r_{\textsc{pose}} + 0.2 * r_{\textsc{vel}}$.  $r_{\textsc{height}} = 1 - |z - 0.2587| / 0.2587 $ is rewarded for approaching the standing position on the z-axis. $r_{\textsc{pose}}=\exp \{ - 0.6 * \sum |m_{target} - m| \}$ is for correct motor positions. $r_{\textsc{vel}}$ is punished for positive velocity (standing should be still in the end).
        For the locomotion task, the reward function is inspired by \citet{kostrikov2023demonstrating}. $r(s,a) = r_v(s, a) - 0.1 * v^2_{\textsc{yaw}}$. $v_{\textsc{yaw}}$ is angular yaw velocity and $r_v(s, a)=1$ for $v_x \in [0.5, 1.0]$, $=0$ for $v_x \ge 2.0$ and $=1-|v_x-0.5|$ otherwise, is rewarded for velocity in x-axis. 

    \paragraph{Simulation Training.}
        We first train SAC agents with and without \textit{Adv-USR} in simulations. Each step simulates 0.033 seconds so that the control frequency is 33 Hz. Agents are trained for $1e^6$ steps in the standing task and $2e^6$ steps in the locomotion task. The model structures and hyperparameters are the same as in RWRL experiments and can be referred to Appendix~\ref{sec:model_structure} and \ref{sec:hyperparameters}. The regularizer coefficients for \textit{Adv-USR} are 1e-3 and 1e-4 for two tasks. 
        
    \paragraph{Real Robot Evaluation.}
        After training, we directly deploy the learned policies on real robots. Since all sensors are internal on robots, there are no external sensors required. The control frequency is set as 33 Hz. We run each policy 50 episodes with 1000 steps and report the $10\%$-quantile of the final return and $5\%-15\%$-quantile as the error bar in Figure~\ref{fig:sim_to_real_results}.

    \begin{table}[ht]
    \caption{Unitree A1's Parameters in Simulation and Real Robot.}
    \begin{center}
        \scalebox{1.0}{
            \begin{tabular}{lcc}
                \toprule
                \textbf{Parameters} &  \textbf{Simulation} & \textbf{Real Robot}  \\
                \midrule	
                Mass (kg) & 12 & [10, 14] \\
                Center of Mass (cm) & 0 & [-0.2, 0.2] \\
                Motor Strength ($\times$ default value) & 1.0 & [0.8, 1.2] \\
                Motor Friction (Nms/rad) & 1.0 & [0.8, 1.2] \\
                Sensor Latency (ms) & 0 & [0, 40]  \\
                Initial position (m) &  (0, 0, 0.25) & ([-1, 1], [-1, 1], [0.2, 0.3]) \\
                \bottomrule
            \end{tabular}
            }
    \end{center}
    \label{tab:a1_env}
    \end{table}

\section{Extra Experimental Results}
\label{sec:extra_experimental_results}

\subsection{Constant Perturbation on System Parameters}
\label{sec:constant_perturbation_on_system_parameters}
    Extra experimental results for task \textit{cartpole\_balance}, \textit{walker\_walk} and \textit{quadruped\_run} can be found in Table~\ref{tab:mfrl_extra}. We can observe similar results as in the main paper that both \textit{L2-USR} and \textit{L1-USR} can outperform the default version under some certain perturbations (e.g. \textit{L1-USR} in \textit{cartpole\_balance} for pole\_mass, \textit{L2-USR} in \textit{walker\_walk} for thigh\_length), while \textit{Adv-USR} achieves the best average rank among all perturbed scenarios, showing the best zero-shot generalization performance in continuous control tasks. Notably, \textit{L2-Reg} in \textit{walker\_walk}  and \textit{L1-Reg} in \textit{quadruped\_run} also achieve a competitive robust performance compared with \textit{Adv-USR}. A possible reason is that, for environments with high-dimensional state and action spaces, some of them are redundant and direct regularization on value function's parameters is effective to perform dimensionality reduction and thus learns a generalized policy.

    \begin{table*}[ht]
    \caption{\textit{Robust-AUC} of all algorithms and their uncertainties on RWRL benchmark.}
        \begin{center}
    	 \scalebox{0.65}{
    		\begin{tabular}{llcccccc}
    			\toprule
    			\multirow{2}{*}{\textbf{Task Name}} & 
    			\multirow{2}{*}{\textbf{Variables}} &
    			\multicolumn{6}{c}{\textbf{Algorithms}} \\
    			&& \textit{None-Reg} & \textit{L1-Reg} & \textit{L2-Reg} & \textit{L1-USR} & \textit{L2-USR} & \textit{Adv-USR} \\
	    \midrule	\multirow{5}{*}{\textit{cartpole\_balance}} 
		& pole\_length & 981.45 (5.92) & \textbf{989.85 (3.74)} & 989.33 (8.32) & 798.07 (22.93) & 944.89 (23.33) & 959.66 (26.58)  \\
		& pole\_mass & 623.88 (28.64) & 605.35 (55.60) & 607.79 (23.18) & \textbf{632.54 (14.74)} & 588.13 (38.33) & 627.00 (22.90)   \\
	    & joint\_damping & 970.83 (21.89) & 978.97 (9.95) & 982.71 (15.24) & \textbf{985.57 (10.01)} & 978.62 (17.52) & 982.43 (130.03) \\
	    & slider\_damping & 999.44 (0.26) & 999.30 (0.43) & 999.34 (0.57) & 999.45 (0.31) & 999.49 (0.48) & \textbf{999.55 (0.32)}  \\
	    & average rank & 4.00 & 4.00 & 3.25 & 2.75 & 4.50 & \textbf{2.50}   \\
	    \midrule	\multirow{5}{*}{\textit{walker\_walk}} 
		& thigh\_length  & 315.64 (37.24) & 237.90 (25.04) & 345.12 (40.30) & 316.61 (37.86) & \textbf{350.01 (34.37)} & 318.88 (53.73) \\
		& torso\_length  & 498.01 (54.04) & 300.39 (114.06) & 533.96 (47.73) & \textbf{550.44 (50.83)} & 543.39 (42.52) & 543.91 (54.36)  \\
	    & joint\_damping  & 364.70 (50.33) & 283.19 (30.18) & \textbf{420.23 (51.84)} & 357.39 (56.04) & 356.22 (49.74) & 368.35 (64.76) \\
	    & contact\_friction & 885.01 (27.47) & 714.94 (27.15) & \textbf{907.13 (18.94)} & 897.65 (23.49) & 900.58 (21.46) & 902.03 (24.68) \\
	    & average & 4.50 & 6.00 & \textbf{2.00} & 3.25 & 3.00 & 2.25     \\
	    \midrule	\multirow{5}{*}{\textit{quadruped\_run}} 
		& shin\_length  & 204.14 (91.36) & \textbf{280.11 (61.49)} & 168.95 (38.19) & 246.43 (117.07) & 214.18 (56.06) & 250.07 (79.37)  \\
		& torso\_density  & 321.24 (76.70) & \textbf{417.68 (88.55)} & 252.37 (88.41) & 319.43 (90.79) & 225.32 (80.49) & 383.14 (67.34) \\
	    & joint\_damping  & 367.05 (139.61) & 641.08 (19.12) & 687.42 (12.85) & 324.38 (14.73) & \textbf{692.02 (6.98)} & 664.25 (19.35) \\
	    & contact\_friction & \textbf{654.43 (57.94)} & 614.21 (76.60) & 473.58 (61.72) & 632.64 (95.18) & 624.32 (124.39) & 537.19 (76.22)  \\
	    & average rank & 3.75 & \textbf{3.00} & 5.00 & \textbf{3.00} & 3.25 & \textbf{3.00}   \\
    	\bottomrule 
    		\end{tabular}
    		}
    	\end{center}
    \label{tab:mfrl_extra}
    \end{table*}

\subsection{Computational Cost of All Algorithms}
\label{sec:computational_cost_all_algorithms}
    We report the average computation time (in milliseconds) for a single value update of all algorithms in Table~\ref{tab:computational_cost}. We notice that the computation of all algorithms increases as the environment's complexity grows, and \textit{L1-Reg}, \textit{L1-Reg}, \textit{Adv-USR}'s complexities are acceptable compared with other baselines ($\times$ $1\sim1.25$ time cost). The computation only becomes a problem when applying \textit{USR} methods to dynamics with millions of parameters (common in model-based RL~\cite{janner2019when}). To tackle this issue, we can identify important parameters to reduce computation costs, as stated in Section~\ref{sec:limitations}. 

    Theoretically, the additional computational cost largely depends on the norm term $\|\nabla_wP(s'|s,a;\bar{w})V^{\pi}(s')\|_2$ in Equation~\ref{eq:l2_usr},  time complexity is $O(W)$ ($W$ is the number of parameters).

    \begin{table*}
    \caption{The computational cost (in milliseconds) for each value update of Robust RL algorithms.}
        \begin{center}
         \scalebox{0.8}{
            \begin{tabular}{lcccccc}
                \toprule
                \multirow{2}{*}{\textbf{Task Name}} & 
                \multicolumn{6}{c}{\textbf{Algorithms}} \\
                & \textit{None-Reg} & \textit{L1-Reg} & \textit{L2-Reg} & \textit{L1-USR} & \textit{L2-USR} & \textit{Adv-USR} \\
                \midrule	
                  \shortstack{\textit{cartpole\_balance}\\\textit{cartpole\_swingup}} & 14.72 ± 1.57 & 16.48 ± 1.68 & 17.05 ± 1.63 & 17.62 ± 1.49 & 22.48 ± 3.21 & 22.48 ± 3.21 \\  
                  \midrule
                  \shortstack{\textit{walker\_stand}\\\textit{walker\_walk}} & 15.71 ± 1.23 & 18.89 ± 1.58 & 17.52 ± 1.92 & 18.06 ± 1.80 & 18.39 ± 1.90 & 23.16 ± 1.72 \\
                  \midrule
                  \shortstack{\textit{quadruped\_walk}\\\textit{quadruped\_run}} & 15.93 ± 1.68 & 19.47 ± 1.47 & 19.56 ± 1.67 & 20.79 ± 4.20 & 19.13 ± 1.98 & 25.23 ± 2.14 \\
                \bottomrule
            \end{tabular}
            }
        \end{center}
    \label{tab:computational_cost}
    \end{table*}

\subsection{Constant Perturbation on Multiple System Parameters}
\label{sec:constant_perturbation_on_multiple_system_parameters}

    In real-world scenarios, there would be uncertainties in all system parameters. We provide the following additional experimental results to show the robustness when all parameters are perturbed simultaneously. The specific environmental setup is that all 4 parameters are perturbed simultaneously during testing. The perturbation intensity grows from 0 to 1. 0 resembles training environments without perturbations and 1 represents the allowed maximum perturbed values in Table~\ref{tab:rwrl_env}. We adopt the same metric \textit{Robust-AUC} and report it in the following table. All methods become less robust due to the increasing difficulty of perturbations, but \textit{Adv-USR} still outperforms others. 

    \begin{table*}[ht]
    \caption{\textit{Robust-AUC} of all algorithms and their uncertainties on RWRL benchmark.}
        \begin{center}
            \scalebox{0.78}{\begin{tabular}{lcccccc}
                \toprule
                \multirow{2}{*}{\textbf{Task Name}} & 
                \multicolumn{6}{c}{\textbf{Algorithms}} \\
                & \textit{None-Reg} & \textit{L1-Reg} & \textit{L2-Reg} & \textit{L1-USR} & \textit{L2-USR} & \textit{Adv-USR} \\
        \midrule 
    \textit{cartpole\_swingup} & 867.42 (0.27) & 856.87 (0.52) & 866.99 (0.23) & 867.61 (0.44) & 867.45 (0.26) & \textbf{881.36 (0.21)} \\
    \textit{walker\_stand} & 254.04 (32.91) & 235.36 (25.29) & 254.64 (35.01) & 262.57 (25.02) & 263.35 (24.85) & \textbf{266.97 (7.75)} \\
    \textit{quadruped\_walk} & 522.98 (34.93) & 524.24 (85.03) & 525.51 (24.58) & 525.14 (85.68) & 506.17 (54.92) & \textbf{534.61 (18.25)} \\
        \bottomrule 
        \end{tabular}}
    \end{center}
    \end{table*}

\subsection{Noisy Perturbation on System Parameters}
\label{sec:noisy_perturbation_on_system_parameters}
    One may also be interested in the noisy perturbation setup where system parameters keep changing at every time step. This setup extends the Robust RL framework where the perturbation is fixed throughout the whole episode. 
    The specific experimental setup noisy perturbation is as follows: the environmental parameter starts from the nominal value and follows a zero-mean Gaussian random walk at each time step. The nominal value and the standard deviation of the Gaussian random walk are recorded in Table~\ref{tab:random_walk}. 
    The experimental result on \textit{quadruped\_walk} is shown in Figure~\ref{fig:noisy_params}. In this experiment, \textit{L1-Reg} achieves the best robustness, while our method \textit{Adv-USR} achieves Top-2 performance in 3 out of 4 perturbations. While \textit{L1-Reg} performs less effectively in the case of fixed perturbation, it implies that different regularizers do have different impacts on these two types of perturbations. For noisy perturbation, environmental parameters walk randomly around the nominal value and reach the extreme value less often, which requires a less conservative robust RL algorithm. Our algorithm \textit{Adv-USR}, originally designed for fixed perturbation problem, achieves good but not the best performance, which leads to an interesting future research direction on the trade-off between robustness and conservativeness.

    \begin{table}[ht]
    \caption{The perturbed variables for the noise perturbation experiment.}
        \begin{center}
    	 \scalebox{0.8}{
    		\begin{tabular}{llccc}
    		\toprule
    			\textbf{Task Name} &   \textbf{Perturbed Variables} & \textbf{Start Value} & \textbf{Step Standard Deviation} & \textbf{Value Range}  \\
    		\midrule
        	\multirow{4}{*}{\textit{quadruped\_walk}} 
    			& shin\_length & 0.25 & 0.1  & [0.25, 2.0]   \\
    			& torso\_density & 1000 & 500 & [500, 10000]  \\
    		    & joint\_damping & 30 & 10  & [10, 150]  \\
    			& contact\_friction & 1.5 & 0.5 & [0.1, 4.5]  \\
    		\bottomrule
    		\end{tabular}
    		}
    	\end{center}
    	\label{tab:random_walk}
    \end{table}
    
    \begin{figure*}[ht]
        \centering
        \includegraphics[width=1.0\textwidth]{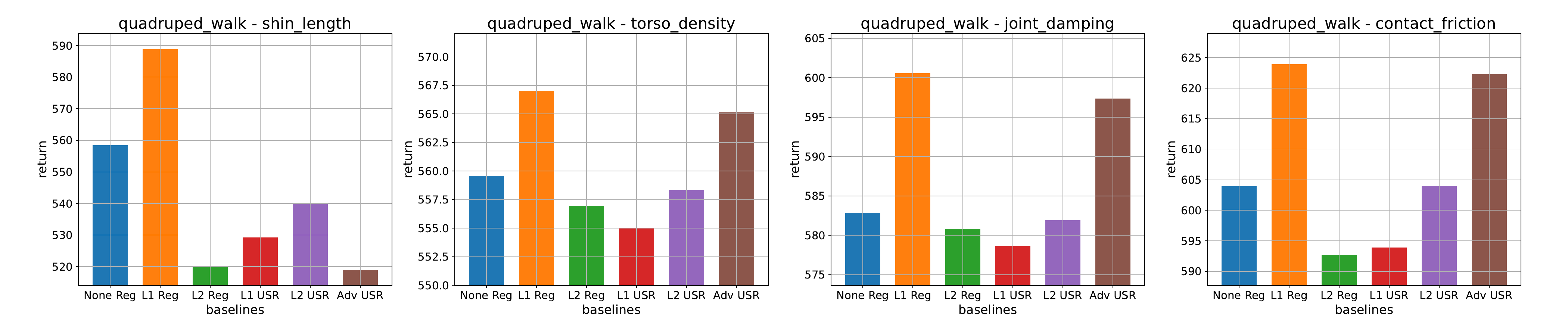}
    \caption{The parameter-return bar graph of all algorithms. All bars represent $10\%$-quantile value of episodic return under noisy environmental parameters. }
    \label{fig:noisy_params}
    \end{figure*}

\subsection{Training Performances}
\label{sec:training_performances}
    
    \begin{figure}[hb]
    \centering
    \begin{subfigure}[b]{0.49\linewidth}
        \centering
        \includegraphics[width=1.0\textwidth]{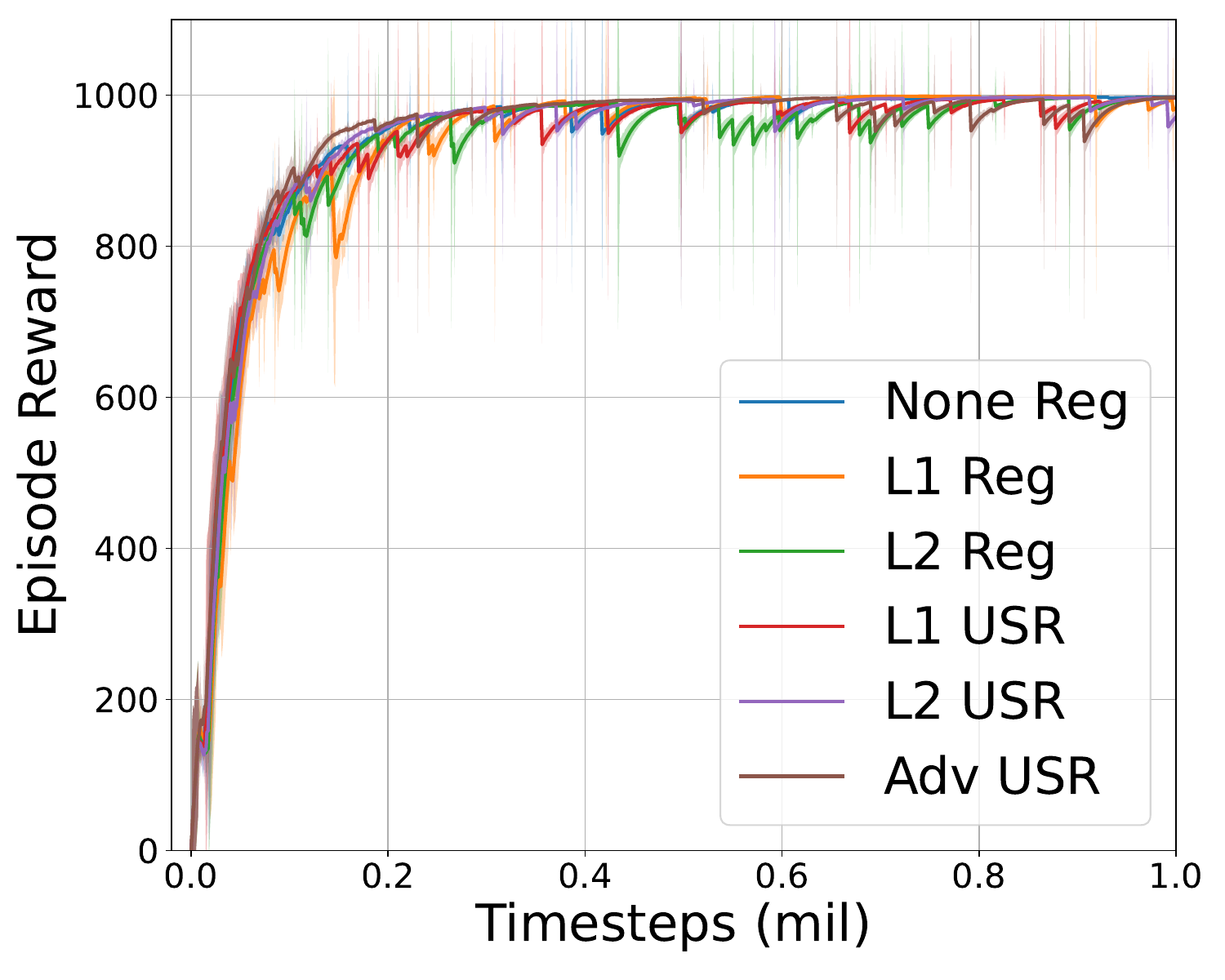}
        \caption{Episode reward}
        \label{fig:episode_reward}
    \end{subfigure}
    \begin{subfigure}[b]{0.49\linewidth}
        \centering
        \includegraphics[width=1.0\textwidth]{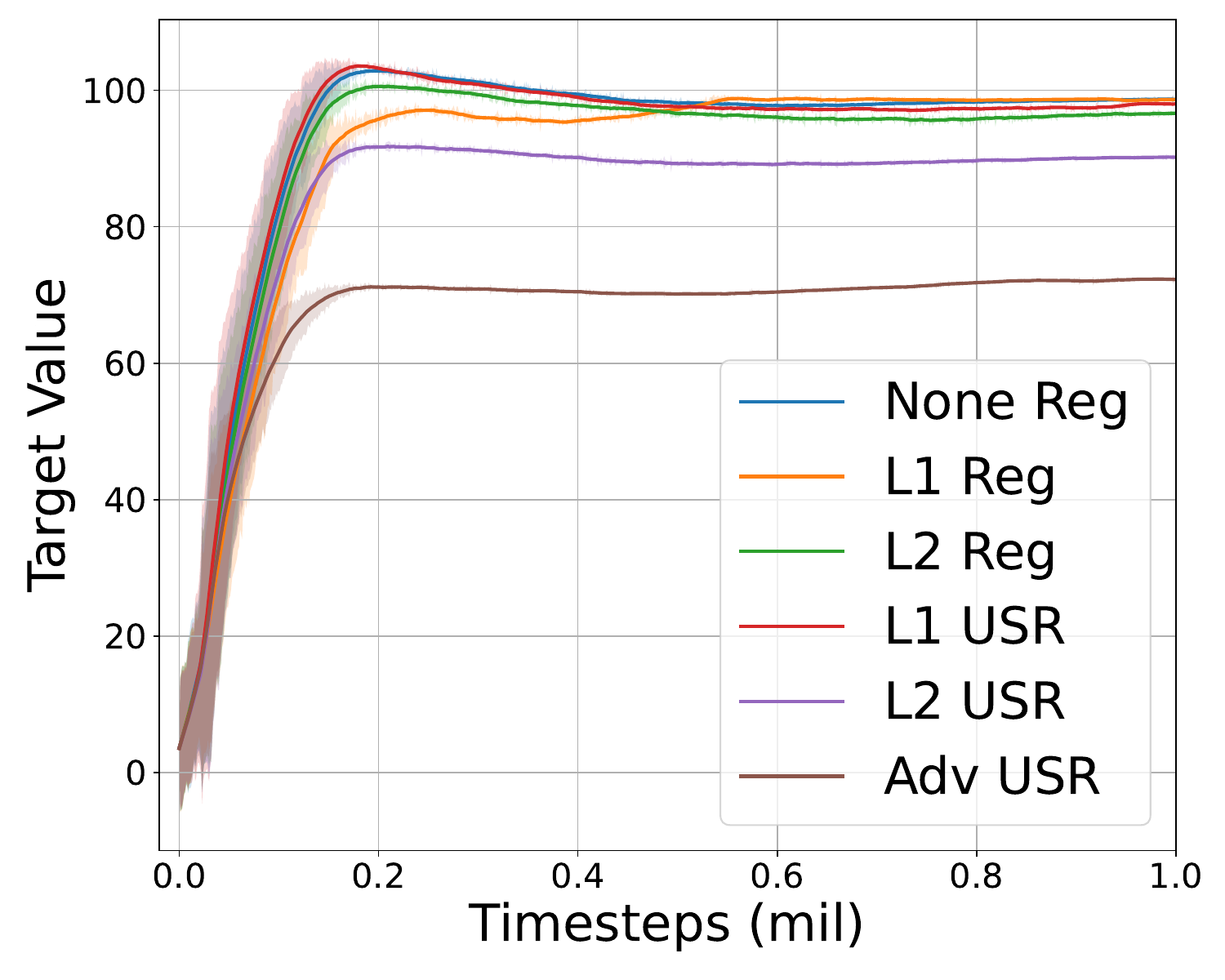}
        \caption{Target value}
        \label{fig:target_value}
    \end{subfigure}
    \caption{Training performances of all algorithms on \textit{quadruped\_walk} task}.
    \end{figure}

    We further analyze the effects of \textit{Adv-USR} during the training process. Since all algorithms are trained on the same nominal environment, they all learn to perform well on the nominal environment with a similar episode reward curve as in Figure~\ref{fig:episode_reward}. However, their target values considering the robustness of learned policies are quite different (Figure~\ref{fig:target_value}). \textit{Adv-USR} has the lowest target value, indicating that the adversarial uncertainty set actually considers the pessimistic objective which encourages to learn more robust policy to resist this uncertainty. This verifies the correctness of our theoretical claims.

\section{Comparison with Domain Randomization}
\label{sec:comparison_with_domain_randomization}

    The rethink on domain randomization (DR) directly motivates this paper: domain randomization utilizes a variety of environments and trains an average model across them. Could we develop a more efficient way? Here we discuss the drawbacks of DR and why our method could be a possible solution. 
    
    \begin{itemize}
        \item \textbf{Requirements on expert knowledge:} DR needs to randomize multiple environments with various environmental parameters. However, which parameters and their range to randomize all require heavy expert knowledge. We have experience when applying DR on the sim-to-real locomotion task. If we set the PD controller's gain in a large range, RL fails to learn even in simulation. If this range is small, the learned policy can't transfer to real robots. In comparison, our method reduces this effort by only training a robust policy on a single nominal environment with a virtual adversarial uncertainty set.
        \item \textbf{Feasibility on certain setups:} In some cases, we don't have access to create multiple simulated environments. This could happen in commercial autonomous driving software without exposing the low-level dynamics system, or the current powerful ChatGPT-styled cloud-based language model (if viewing the chatbot as a simulator), or even real-to-real transfer (from one real robot to another slightly different real robot). It's not possible to create different environments for DR but still possible to learn an adversarial uncertainty set and train a robust policy based on that. 
        \item \textbf{Training convergence:} In general, DR has a slower and more unstable training process since randomized multiple environments can be quite different and learning a single policy on them can be hard. 
    \end{itemize}

    We compare the popular domain randomization (DR) techniques on sim-to-real tasks (A1 quadruped standing and locomotion). The experimental setup is as follows. During \textbf{training in simulation}, for \textit{None-Reg} and \textit{Adv-USR}, agents are trained on the nominal environment with standard robots' parameters ("Other" column in Table~\ref{tab:a1_env_rebuttal}). For \textit{DR}, agents are trained on randomized initialized parameters within a certain range ("DR" column in Table~\ref{tab:a1_env_rebuttal}). During \textbf{testing on real robots}, all policies are fixed and evaluated on the same robots with 50 episodes. We present the training curves in Figure~\ref{fig:sim_to_real_train_results} and testing performance in Figure~\ref{fig:sim_to_real_test_results}.

    DR has a slower convergence rate during training since randomized environments can be quite different and learning a single policy on them can be hard. \textit{Adv-USR} can already replace DR on simple sim-to-real tasks (standing). For more complex tasks (locomotion), DR still performs better since the performance is more affected by the model mismatch. But we believe our method is still valuable to provide an alternative concise choice to DR. Furthermore, as we have mentioned in the limitation section, our method is not perpendicular to the DR. Combing adversarial uncertainty sets could potentially reduce the range of randomization of DR while reaching the same robust performance. 
    
    \begin{table}[ht]
    \caption{Unitree A1's Parameters in Simulation for different algorithms.}
    \begin{center}
        \scalebox{0.8}{
            \begin{tabular}{lcc}
                \toprule
                \textbf{Parameters} &  \textbf{Other} & \textbf{DR}  \\
                \midrule	
                Mass (kg) & 12 & [10, 14] \\
                Center of Mass (cm) & 0 & [-0.2, 0.2] \\
                Motor Strength ($\times$ default value) & 1.0 & [0.8, 1.2] \\
                Motor Friction (Nms/rad) & 1.0 & [0.8, 1.2] \\
                Sensor Latency (ms) & 0 & [0, 40]  \\
                Initial position (m) &  (0, 0, 0.25) & ([-1, 1], [-1, 1], [0.2, 0.3]) \\
                \bottomrule
            \end{tabular}
            }
    \end{center}
    \label{tab:a1_env_rebuttal}
    \end{table}

\begin{figure}[hbt]
    \centering
    \begin{subfigure}[b]{0.49\columnwidth}
        \includegraphics[width=1.0\columnwidth]{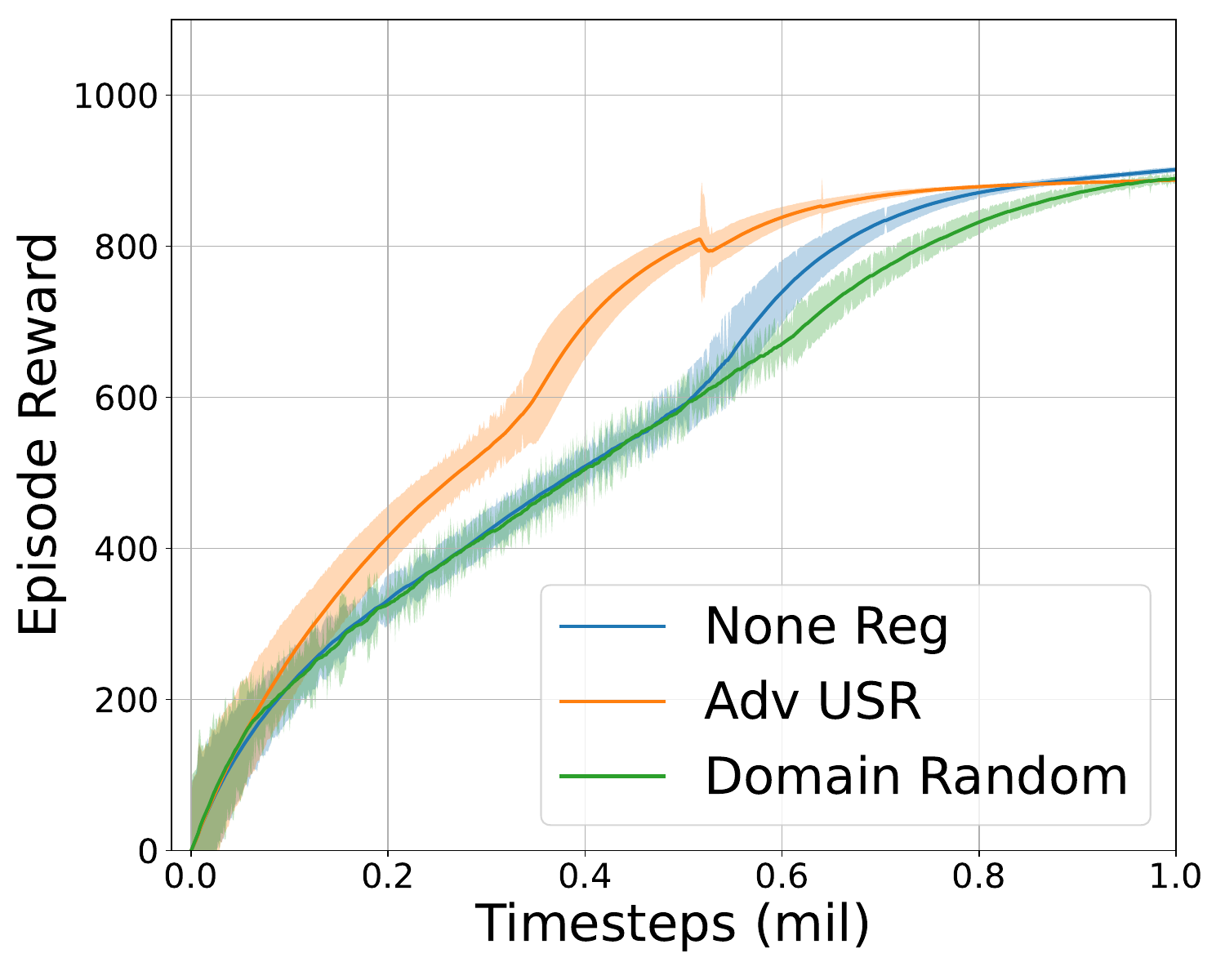}
        \caption{Standing}
        \label{fig:a1_stand_reward}
    \end{subfigure} 
    \begin{subfigure}[b]{0.49\columnwidth}
        \includegraphics[width=1.0\textwidth]{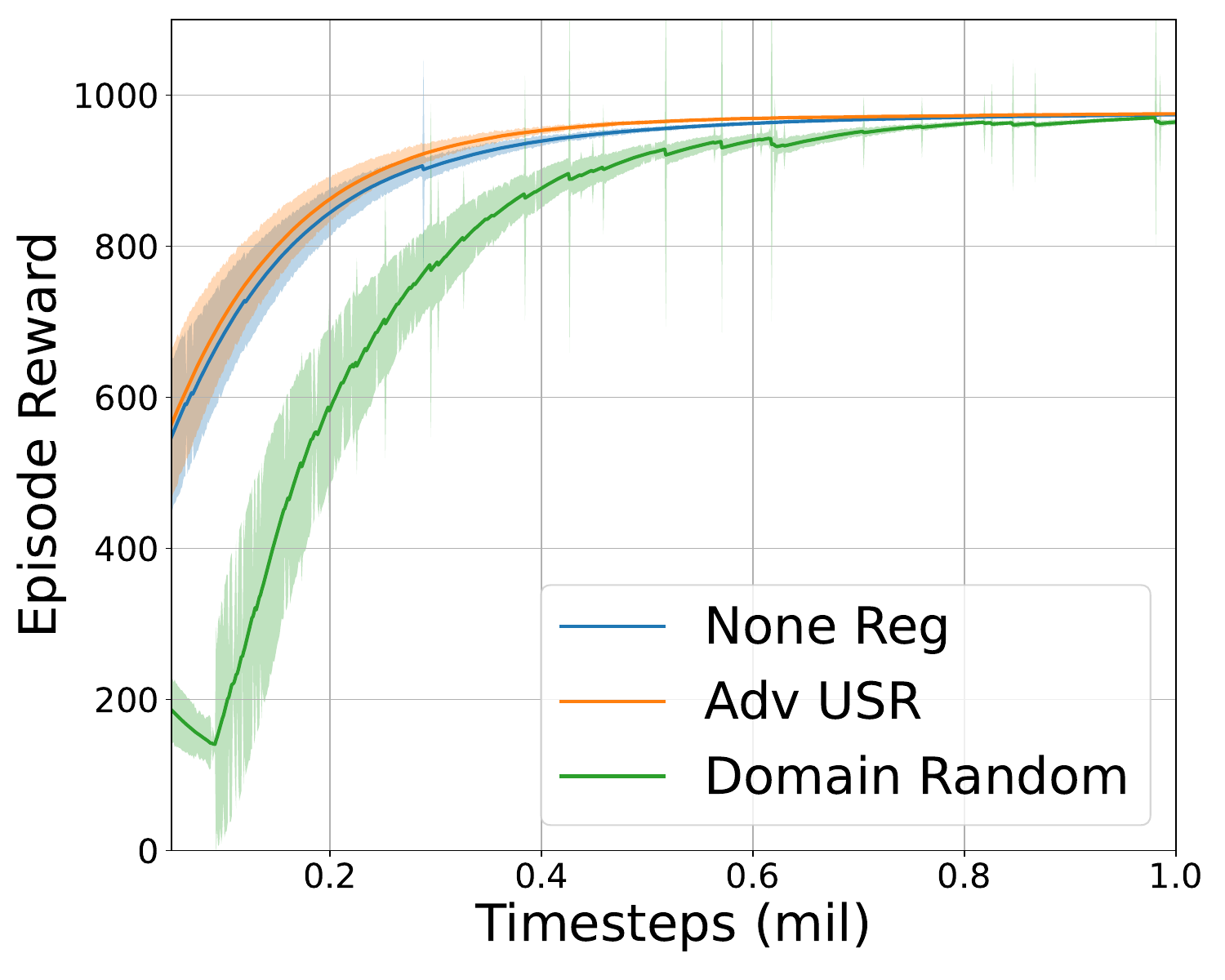}
        \caption{Locomotion}
    \label{fig:a1_locomotion_reward}
    \end{subfigure} 
  \caption{Episode returns during training (simulation).} 
    \label{fig:sim_to_real_train_results}
\end{figure}

\begin{figure}[hbt]
    \centering
        \begin{subfigure}[b]{0.49\columnwidth}
            \includegraphics[width=1.0\columnwidth]{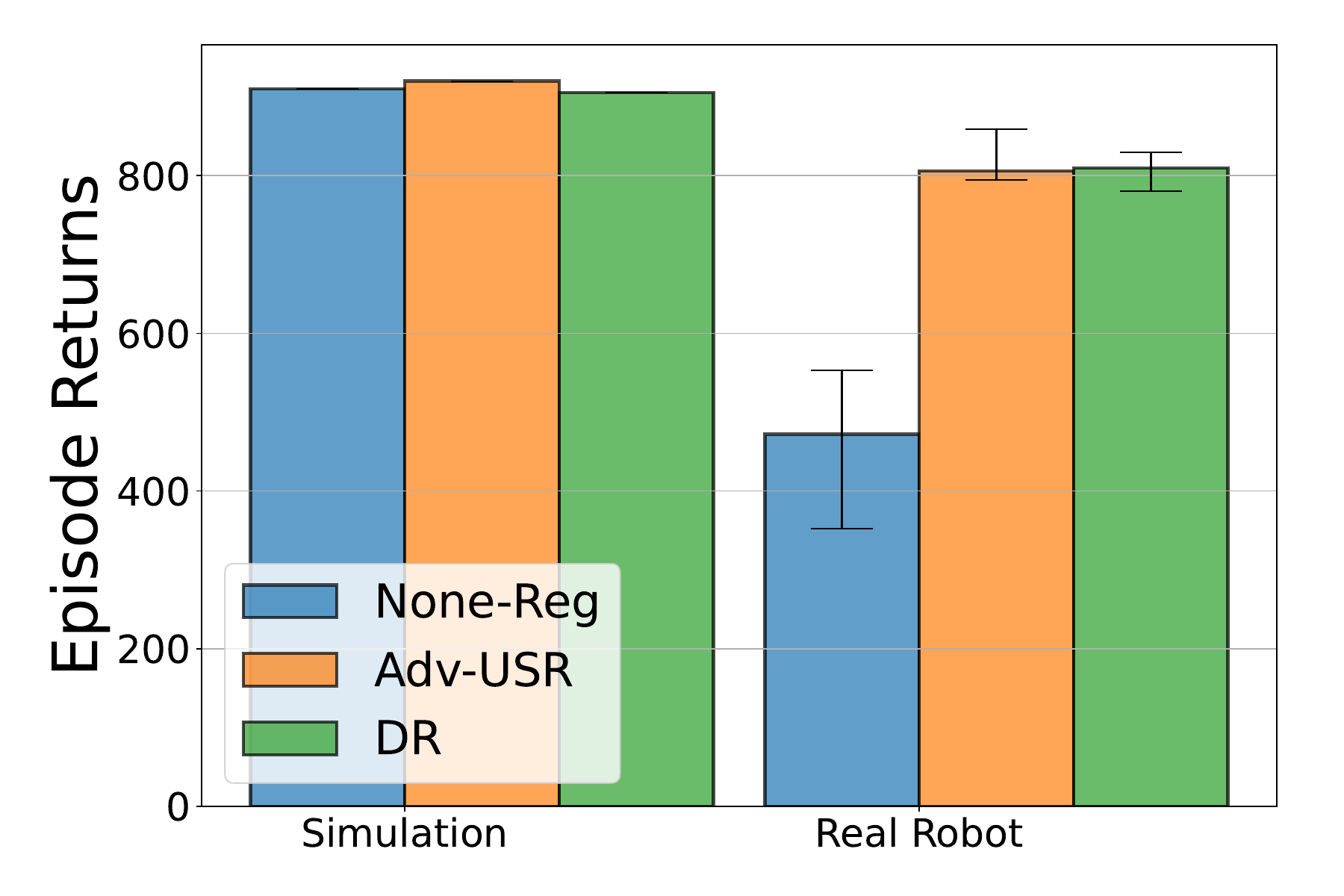}
            \caption{Standing}
            \label{fig:sim_to_real_stand_dr}
        \end{subfigure} 
        \begin{subfigure}[b]{0.49\columnwidth}
            \includegraphics[width=1.0\textwidth]{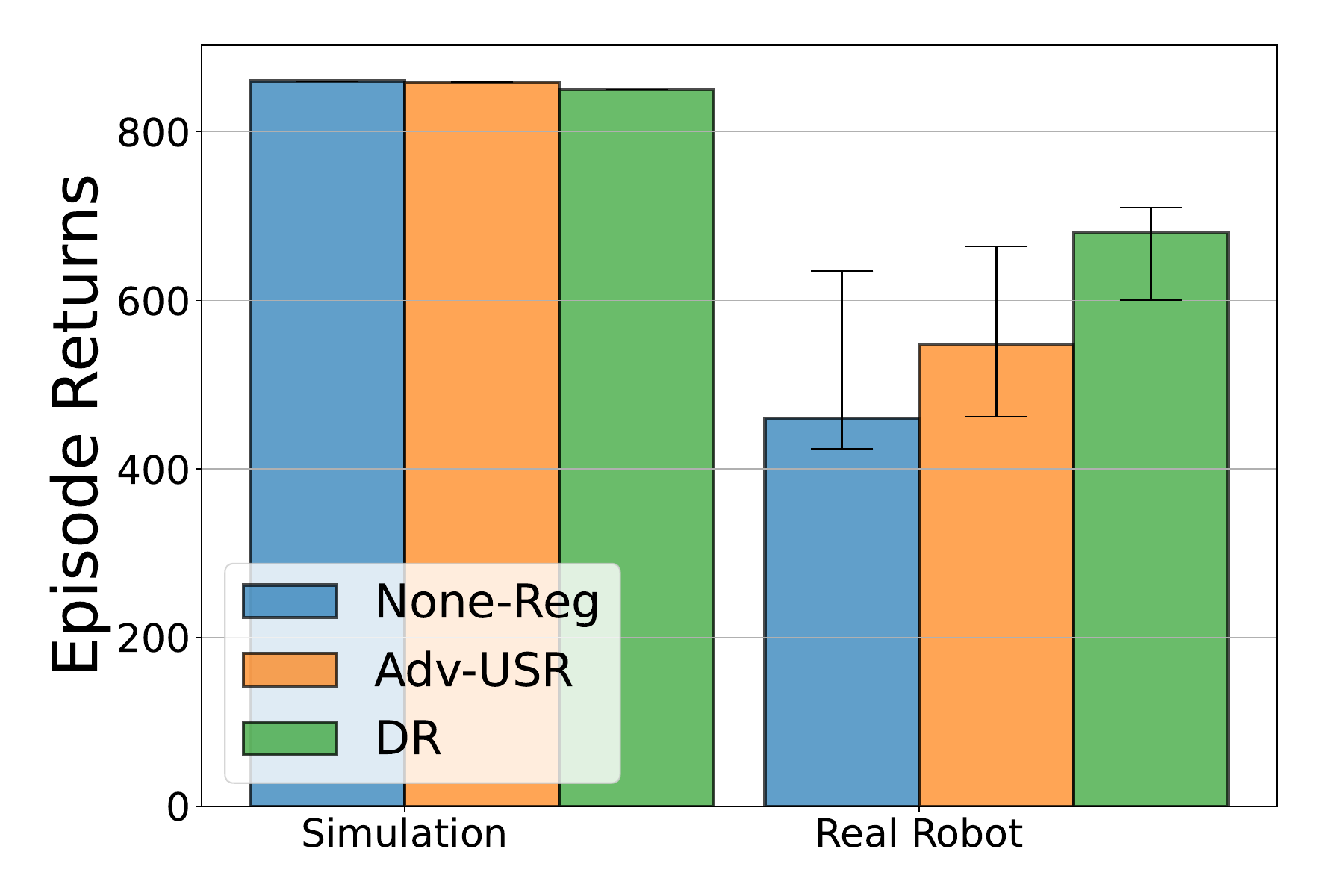}
            \caption{Locomotion}
    	\label{fig:sim_to_real_locomotion_dr}
        \end{subfigure} \\
      \caption{Episode returns during testing (simulation and real robot).}
    \label{fig:sim_to_real_test_results}
    
\end{figure}

\section{Comparison with State Perturbation}
\label{sec:comparison_with_state_perturbation}

    State perturbation describes the uncertainties in the output space of the dynamics model, which is a special case of transition perturbation. We illustrate how to transform the State perturbation into transition perturbation in the following case. 
    
    Considering a $L_2$ uncertainty set on the output of the dynamics model, $ \Omega_{s_p} = \{s_p | s' \sim P(\cdot|s, a; \bar{w}), \|s' - s_p\| \le 1 \}$. We can rewrite the next State distribution $s' \sim P(\cdot|s, a; \bar{w})$ as $s' = f(s'|s, a; \bar{w}) + \eta $, where $\eta$ is a random noise $\eta \sim \mathcal N(0,1)$. Then the perturbed State can be written as $s_p = f(s'|s, a; \bar{w}) + \eta + \beta, \|\beta\| \le 1 $. Viewing $\beta$ as one additional parameter in the dynamics model, the uncertainty set on $\beta$ is actually $\Omega_{\beta} = \{ 
    \| \beta \| \le 1\}$. Based on this uncertainty set on $\beta$, one can further design corresponding regularizers on value function to increase the robustness as discussed in the paper. 

    If the uncertainty set on state space is unknown, denoted as $\Omega_{\beta}$, it is still feasible to include $\beta$ as an additional parameter in the dynamics model. As a result, \textit{Adv-USR} could still be used to handle this unknown uncertainty set on $\beta$.


\end{document}